\theoremstyle{plain}
\newtheorem{theorem}{Theorem}[section]
\newtheorem{proposition}[theorem]{Proposition}
\newtheorem{claim}[theorem]{Claim}
\theoremstyle{definition}
\theoremstyle{remark}
\newcommand{\bh}{\mathbf{h}}
\newcommand{\be}{\mathbf{e}}
\newcommand{\ba}{\mathbf{a}}
\newcommand{\bc}{\mathbf{c}}
\newcommand{\bb}{\mathbf{b}}
\newcommand{\bx}{\mathbf{x}}
\newcommand{\by}{\mathbf{y}}
\newcommand{\bz}{\mathbf{z}}
\newcommand{\bP}{\mathbf{P}}
\newcommand{\bA}{\mathbf{A}}
\newcommand{\bW}{\mathbf{W}}
\newcommand{\bG}{\mathbf{G}}
\newcommand{\bL}{\mathbf{L}}
\newcommand{\bI}{\mathbf{I}}
\newcommand{\bV}{\mathbf{V}}
\newcommand{\bK}{\mathbf{K}}
\newcommand{\bX}{\mathbf{X}}
\newcommand{\bB}{\mathbf{B}}
\newcommand{\bR}{\mathbf{R}}
\newcommand{\bfzero}{\mathbf{0}}
\newcommand{\bfmu}{\bm{\mu}}
\newcommand{\bfnu}{\bm{\nu}}
\newcommand{\bflambda}{\bm{\lambda}}
\newcommand{\bfSigma}{\bm{\Sigma}}
\newcommand{\tr}{tr}
\newcommand{\GP}{\mathcal{GP}}
\newcommand{\order}{\mathcal{O}}
\newcommand{\normal}{\mathcal{N}}
\DeclareMathOperator*{\argmin}{arg\,min}
\DeclareMathOperator*{\argmax}{arg\,max}
\DeclareMathOperator*{\KL}{KL}
\DeclareMathOperator*{\ELBO}{ELBO}
\DeclareMathOperator*{\bbE}{\mathbb{E}}
\newcommand{\sparsity}{\mathcal{S}}
\newcommand{\anc}{\mathcal{A}}
\DeclareSymbolFont{symbolsC}{U}{pxsyc}{m}{n}
\DeclareMathSymbol{\medcirc}{\mathbin}{symbolsC}{7}
\icmltitlerunning{Variational SIC-LGP Approximation via Double KL Minimization}
\begin{document}

\twocolumn[
\icmltitle{Variational Sparse Inverse Cholesky Approximation for Latent Gaussian Processes via Double Kullback-Leibler Minimization}

\icmlsetsymbol{equal}{*}

\begin{icmlauthorlist}
\icmlauthor{Jian Cao}{equal,zzz}
\icmlauthor{Myeongjong Kang}{equal,yyy}
\icmlauthor{Felix Jimenez}{yyy}
\icmlauthor{Huiyan Sang}{yyy}
\icmlauthor{Florian Sch\"afer}{comp}
\icmlauthor{Matthias Katzfuss}{zzz}
\end{icmlauthorlist}

\icmlaffiliation{yyy}{Department of Statistics, Texas A\&M University, College Station, TX, USA}
\icmlaffiliation{zzz}{Department of Statistics and Institute of Data Science, Texas A\&M University, College Station, TX, USA}
\icmlaffiliation{comp}{School of Computational Science and Engineering, Georgia Institute of Technology, Atlanta, GA, USA}

\icmlcorrespondingauthor{Matthias Katzfuss}{katzfuss@gmail.com}

\icmlkeywords{maximum-minimum-distance ordering; nearest neighbors; variational inference; Vecchia approximation}

\vskip 0.3in
]

\printAffiliationsAndNotice{\icmlEqualContribution} 

\begin{abstract}
To achieve scalable and accurate inference for latent Gaussian processes, we propose a variational approximation based on a family of Gaussian distributions whose covariance matrices have sparse inverse Cholesky (SIC) factors.
We combine this variational approximation of the posterior with a similar and efficient SIC-restricted Kullback-Leibler-optimal approximation of the prior.
We then focus on a particular SIC ordering and nearest-neighbor-based sparsity pattern resulting in highly accurate prior and posterior approximations. For this setting, our variational approximation can be computed via stochastic gradient descent in polylogarithmic time per iteration.
We provide numerical comparisons showing that the proposed double-Kullback-Leibler-optimal Gaussian-process approximation (DKLGP) can sometimes be vastly more accurate for stationary kernels than alternative approaches such as inducing-point and mean-field approximations at similar computational complexity.
\end{abstract}

\section{Introduction}
\label{sec:intro}

Gaussian process (GP) priors are popular models for unknown functions in a variety of settings, including geostatistics \citep[e.g.,][]{Stein1999,Banerjee2004,Cressie2011}, computer model emulation \citep[e.g.,][]{Sacks1989,Kennedy2001,gramacy2020surrogates}, and machine learning \citep[e.g.,][]{Rasmussen2006,deisenroth2010efficient}. Latent GP (LGP) models, such as generalized GPs, assume a Gaussian or non-Gaussian distribution for the data conditional on a GP \citep[e.g.,][]{Diggle1998,chan2011generalized}. LGPs extend GPs to a large class of settings, including noisy, categorical, and count data. However, LGP inference is generally analytically intractable and hence requires approximations. In addition, direct GP inference is prohibitive for large datasets due to cubic scaling in the data size.
There are two main challenges for (L)GPs in many applications: One is to specify or learn a suitable kernel for the GP, and the other is carrying out fast inference for a given kernel. In this paper, we make no contributions to the former and instead focus on the latter challenge: We assume that a parametric kernel form is given and propose an efficient approximation method for LGP inference via structured variational learning. 

Many approaches to scaling GPs to large datasets were reviewed in \citet{Heaton2017} and \citet{Liu2018}, including low-rank approaches with a small number of pseudo points that are popular in machine learning.
Such low-rank GP approximations have been combined with variational inference for GPs \citep[e.g.,][]{Titsias2009,Hensman2013} and LGPs \citep[e.g.,][]{Hensman2015,leibfried2020tutorial}.

A highly promising approach to achieve GP scalability is given by nearest-neighbor Vecchia approximations from spatial statistics \citep[e.g.,][]{Vecchia1988,Stein2004,Datta2016,Katzfuss2017a}, which are optimal with respect to forward Kullback-Leibler (KL) divergence under the restriction of sparse inverse Cholesky (SIC) factors of the covariance matrix \citep{Schafer2020}.
Such SIC approximations have several attractive properties \citep[e.g., as reviewed by][]{Katzfuss2020}. They result in a valid joint density function given by the product of univariate conditional Gaussians, each of which can be independently computed in cubic complexity in the number of neighbors. This allows straightforward mini-batch subsampling with unbiased gradient estimators \citep{Cao2022}.
For the ordering and sparsity pattern used here, the number of neighbors needs to grow only polylogarithmically with the data size to achieve $\epsilon$-accurate approximations for Mat\'ern-type kernels up to boundary effects \citep{Schafer2020} due to the screening effect \citep{Stein2011}.
Many existing GP approximations, including low-rank and partially-independent conditional approaches, can be viewed as special cases of SIC approximations corresponding to particular orderings and sparsity patterns \citep{Katzfuss2017a}. 
SIC approximation using our ordering and sparsity pattern does not exhibit the same limitations as low-rank approximations \citep[][]{Stein2013a} and can hence be significantly more accurate for non-latent (i.e., directly observed) GPs \citep{Cao2022}.

SIC approximations of LGPs are more challenging. For LGPs with Gaussian noise, applying SIC approximations to the noisy responses reduces accuracy, and SIC approximations of the latent field may not be scalable \citep[e.g.,][]{Katzfuss2017a}. Existing approaches addressing this challenge \citep{Datta2016,Katzfuss2017a,Schafer2020,Geoga2022} do not consider estimation using stochastic gradient descent (SGD). 
For non-Gaussian LGPs, Laplace SIC approximations \citep{Zilber2019} are straightforward but can be inaccurate.
\citet{liu2019amortized} combined an SIC-type approximation to the prior with variational inference based on a variational family of Gaussians with a sparse Cholesky factor of the covariance matrix, but we are not aware of results guaranteeing that the covariance-Cholesky factor exhibits (approximate) sparsity under random ordering.
\citet{Wu2022} combined SIC-type approximations of LGPs with mean-field variational inference, but the latter may be inaccurate when there are strong correlations in the GP posterior \citep{mackay1992practical}.

To achieve scalable and accurate inference for LGPs, we propose a variational family of SIC Gaussian distributions and combine it with a SIC approximation to the GP prior (see Figure \ref{fig:diagram}).
Our approach is double-KL-optimal in the sense that variational approximation is reverse-KL-optimal for a given log normalizer (i.e., evidence) and our prior SIC approximation, which is available in closed form, is forward-KL-optimal for a given sparsity pattern \citep{Schafer2020}. 
\begin{figure}
    \begin{center}
\includegraphics[width=0.9\linewidth]{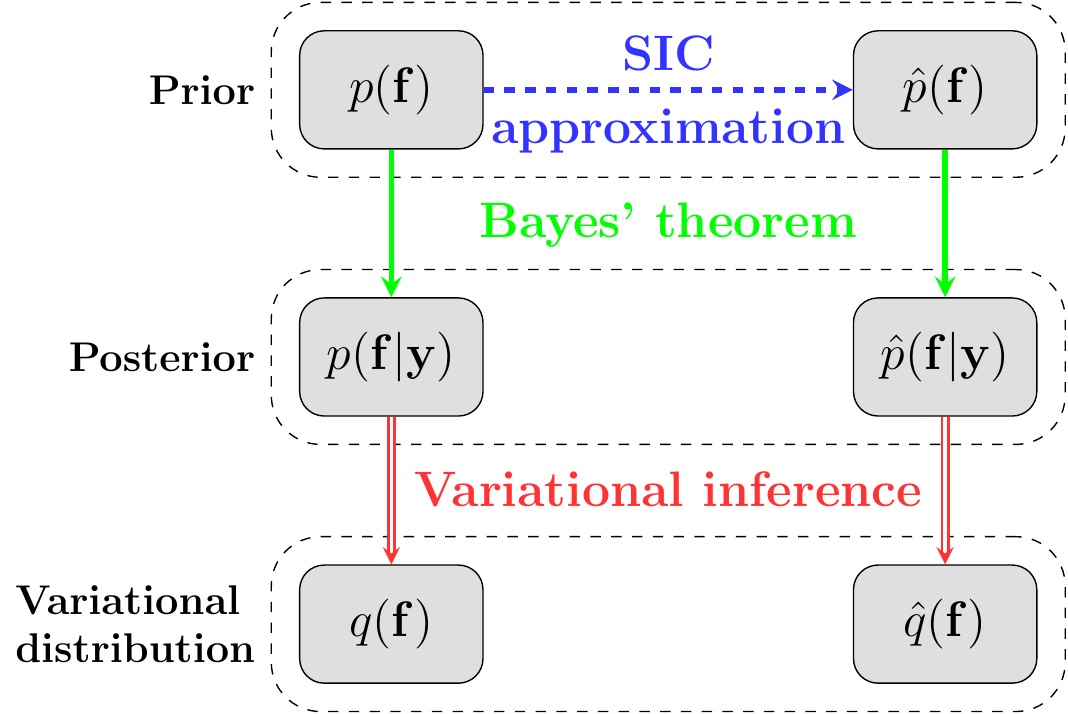}
    \caption{Double KL minimization for approximating the posterior distribution of a latent Gaussian field $\mathbf{f}$ given data $\by$: Based on a forward-KL-optimal SIC approximation $\hat p(\mathbf{f})$ of the prior, we obtain an SIC-restricted reverse-KL-optimal variational approximation $\hat q(\mathbf{f})$ to the posterior.}
    \label{fig:diagram}
    \end{center}
\end{figure}
Within our double-Kullback-Leibler-optimal Gaussian-process framework (DKLGP), we then focus on a particular ordering and nearest-neighbor-based sparsity pattern resulting in highly accurate prior and posterior approximations. We adopt a novel computational trick based on the concept of reduced ancestor sets for achieving efficient and scalable LGP inference. For this setting, our variational approximation can be computed via SGD in polylogarithmic time per iteration. While inducing-point methods assume that unobserved points depend on data only through inducing points \citep[e.g.,][]{frigola2014variational,Hensman2015}, our method allows fast and accurate KL-optimal prediction based on the screening effect. Our numerical comparisons show that DKLGP can be vastly more accurate than state-of-the-art alternatives such as inducing-point and mean-field approximations at a similar computational complexity.

\section{Methodology \label{sec:methodology}}

\subsection{Model}

Assume we have a vector $\by = (y_1,\ldots,y_n)^\top$ of noisy observations of a latent GP $f(\cdot) \sim \GP(\mu,K)$ at inputs $\bx_1,\ldots,\bx_n \in \mathbb{R}^{d}$, such that $p(\by|\mathbf{f}) = \prod_{i=1}^n p(y_i|f_i)$, where 
\begin{equation}
\label{eq:prior}
\mathbf{f} = (f_1,\ldots,f_n)^\top \sim \normal_n(\bfmu,\bK)
\end{equation}
with $\bfmu_i = \mu(\bx_i)$ and $\bK_{ij} = K(\bx_i,\bx_j)$. Throughout, we view the inputs $\bx_i$ as fixed (i.e., non-random) and hence do not explicitly condition on them.

Unless $\by|\mathbf{f}$ follows a Gaussian distribution, inference (such as computing the posterior $p(\mathbf{f}|\by)$) generally cannot be carried out in closed form. In addition, even for Gaussian likelihoods, direct inference scales as $\order(n^3)$ and is thus computationally infeasible for large $n$.
To address these challenges, we propose an approximation based on double KL minimization.

\subsection{Variational Sparse Inverse Cholesky Approximation\label{sec:varsic}}

Consider a lower-triangular sparsity pattern $\sparsity^q \subset \{1,\ldots,n\}^2$, with $\{(i,i): i=1,\ldots,n\} \subset \sparsity^q$ and such that $i \geq j$ for all $(i,j) \in \sparsity^q$. Our preferred choice of $\sparsity^q$ will be discussed in Section \ref{sec:maximin}, but typically we will have $(i,j) \in \sparsity^q$ if $\bx_i$ and $\bx_j$ are ``close.'' Corresponding to $\sparsity^q$, define the family of distributions $\mathcal{Q} = \{\normal_n(\bfnu,(\bV\bV^\top)^{-1}): \bfnu \in \mathbb{R}^n, \bV \in \mathbb{R}^{n\times n}, \bV \in \sparsity^q\}$, where we write $\bV \in \sparsity^q$ if $(i,j) \in \sparsity^q$ for all $\bV_{ij} \neq 0$. It is straightforward to show that any $q \in \mathcal{Q}$ can be represented in ordered conditional form as $q(\mathbf{f}) = \prod_{i=1}^n q(f_i|\mathbf{f}_{s_i^q})$, where $s_i^q = \{j > i: (j,i) \in \sparsity^q\}$ for $i=1,\ldots,n-1$ and $s_n^q = \emptyset$.

We approximate the posterior $p(\mathbf{f}|\by)$ by the closest distribution in $\mathcal{Q}$ in terms of reverse KL divergence:
\begin{equation}
    \label{eq:klposterior}
    \hat{q}(\mathbf{f}) = \argmin_{q \in \mathcal{Q}} \KL\big( q(\mathbf{f}) \big\| p(\mathbf{f}|\by) \big).
\end{equation}
We have $\KL( q(\mathbf{f}) \| p(\mathbf{f}|\by) ) = \log p(\by) - \ELBO(q)$, where $p(\by)$ does not depend on $q$, and so $\hat{q}$ satisfies
\begin{equation}
    \hat{q}(\mathbf{f}) 
    = \argmax_{q \in \mathcal{Q}} \ELBO(q) \label{eq:ELBOmin}.
\end{equation}

\begin{proposition}\label{prop:elbo}
The ELBO in \eqref{eq:ELBOmin} can be written up to an additive constant of $n/2$ as
\begin{align}
    \displaystyle
    \ELBO(q) 
    ={} &\sum_{i=1}^n \big(\bbE_{q} \log p(y_i|f_i) - ( (\bfnu-\bfmu)^\top \bL_{:,i} )^2/2 \\
    &+ \log (\bV_{ii}^{-1} \bL_{ii}) - \|\bV^{-1}\bL_{:,i}\|^2/2 \big),
\label{eq:ELBO}
\end{align}
where $\bL$ is the inverse Cholesky factor of $\bK$ such that $\bK^{-1} = \bL \bL^\top$, and $\bL_{:,i}$ denotes its $i$th column. 
\end{proposition}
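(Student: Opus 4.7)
The plan is to compute the ELBO directly from its defining three-term decomposition $\ELBO(q) = \bbE_q[\log p(\by|\mathbf{f})] + \bbE_q[\log p(\mathbf{f})] - \bbE_q[\log q(\mathbf{f})]$ and then rewrite each Gaussian term using the factorization $\bK^{-1} = \bL\bL^\top$. The only nontrivial identities needed are the standard Gaussian entropy and quadratic-form formulas; everything else is rearrangement.

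First I would handle the likelihood term. Since $p(\by|\mathbf{f}) = \prod_i p(y_i|f_i)$ by assumption, linearity of expectation gives $\bbE_q[\log p(\by|\mathbf{f})] = \sum_i \bbE_q \log p(y_i|f_i)$, matching the first summand in \eqref{eq:ELBO}. Next I would compute the entropy $-\bbE_q[\log q(\mathbf{f})]$. Writing $q = \normal_n(\bfnu,(\bV\bV^\top)^{-1})$, the standard Gaussian entropy gives $-\bbE_q[\log q(\mathbf{f})] = \tfrac{n}{2}\log(2\pi e) - \log|\bV|$, and because $\bV$ is lower triangular, $\log|\bV| = \sum_i \log \bV_{ii}$.

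The main (but still routine) computation is the cross-entropy $\bbE_q[\log p(\mathbf{f})]$ with $p = \normal_n(\bfmu,\bK)$. Using the Gaussian identity
\begin{equation}
\bbE_q[(\mathbf{f}-\bfmu)^\top \bK^{-1}(\mathbf{f}-\bfmu)] = (\bfnu-\bfmu)^\top \bK^{-1}(\bfnu-\bfmu) + \tr(\bK^{-1}(\bV\bV^\top)^{-1}),
\end{equation}
I would substitute $\bK^{-1} = \bL\bL^\top$ and rewrite each piece in terms of the columns of $\bL$. The quadratic form becomes $\|\bL^\top(\bfnu-\bfmu)\|^2 = \sum_i ((\bfnu-\bfmu)^\top \bL_{:,i})^2$. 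For the trace, the cyclic property gives $\tr(\bL^\top (\bV\bV^\top)^{-1} \bL) = \sum_i \bL_{:,i}^\top \bV^{-\top}\bV^{-1}\bL_{:,i} = \sum_i \|\bV^{-1}\bL_{:,i}\|^2$. Finally, since $\bK^{-1} = \bL\bL^\top$ with $\bL$ lower triangular, $\log|\bK| = -2\sum_i \log \bL_{ii}$.

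Collecting the three contributions yields
\begin{align}
\ELBO(q) = \sum_i \bbE_q \log p(y_i|f_i) + \sum_i \log\bL_{ii} - \sum_i \log\bV_{ii} - \tfrac{1}{2}\sum_i ((\bfnu-\bfmu)^\top\bL_{:,i})^2 - \tfrac{1}{2}\sum_i \|\bV^{-1}\bL_{:,i}\|^2 + C,
\end{align}
where the scalar constants $-\tfrac{n}{2}\log(2\pi)$ from $\log p$ and $\tfrac{n}{2}\log(2\pi e)$ from the entropy combine to $C = n/2$. Pairing $\log\bL_{ii} - \log\bV_{ii} = \log(\bV_{ii}^{-1}\bL_{ii})$ and grouping the summands under a single sum recovers \eqref{eq:ELBO}. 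I do not anticipate any genuine obstacle; the only subtlety worth stating carefully is the column-wise rewriting of $\tr(\bK^{-1}\bSigma_q)$, which is what produces the per-column sparsity structure that makes the result useful downstream.
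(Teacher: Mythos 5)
Your proposal is correct and follows essentially the same route as the paper: the paper writes $\ELBO(q) = \bbE_q \log p(\by|\mathbf{f}) - \KL(q\|p)$ and plugs in the closed-form Gaussian KL divergence, whereas you expand that KL into cross-entropy minus entropy and re-derive it, which is the same computation organized slightly differently. The subsequent column-wise rewriting of the quadratic form, the trace, and the log-determinants, as well as the bookkeeping of the $n/2$ constant, match the paper's argument exactly.
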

All proofs can be found in Appendix~\ref{app:proofs}.

\subsection{Approximating the Prior via a Second KL Minimization}

Even for a sparse $\bV$, computing the ELBO in \eqref{eq:ELBO} is prohibitively expensive for large $n$, because computing $\bL$ (or any of its columns) from $\bK$ generally requires $\order(n^3)$ time. To avoid this, we replace the prior $p(\mathbf{f})$ defined in \eqref{eq:prior} by a Gaussian distribution that minimizes a second KL divergence under an SIC constraint.

Specifically, consider a second lower-triangular sparsity pattern $\sparsity^p \subset \{1,\ldots,n\}^2$, which may be the same as $\sparsity^q$. We define the corresponding set of distributions $\mathcal{P} = \{\normal_n(\tilde\bfmu,(\tilde\bL\tilde\bL^\top)^{-1}): \tilde\bfmu \in \mathbb{R}^n, \tilde\bL \in \mathbb{R}^{n\times n}, \tilde\bL \in \sparsity^p\}$.
We approximate the prior $p(\mathbf{f})$ by the closest approximation in $\mathcal{P}$ in terms of forward KL divergence:
\begin{equation}
    \label{eq:klprior}
    \hat{p}(\mathbf{f}) = \argmin_{\tilde{p} \in \mathcal{P}} \KL\big( p(\mathbf{f}) \big\| \tilde{p}(\mathbf{f}) \big).
\end{equation}
By a slight extension of \citet[][Thm.~2.1]{Schafer2020}, we can show that this optimization problem has an efficient closed-form solution. 

\begin{proposition}\label{prop:prior}
The solution to \eqref{eq:klprior} is $\hat{p}(\mathbf{f}) = \normal_n(\mathbf{f}|\bfmu,(\hat\bL\hat\bL^\top)^{-1})$, where the nonzero entries of the $i$th column of $\hat\bL$ can be computed in $\order(|\sparsity^p_i|^3)$ time as
\begin{equation}
\label{eq:priorklmin}
 \hat\bL_{\sparsity^p_i,i} = \bb_i (\bb_{i,1})^{-1/2}, \quad \text{ with } \bb_i = \bK^{-1}_{\sparsity^p_i,\sparsity^p_i} \be_1,
\end{equation}
and $\sparsity^p_i = \{j: (j,i) \in \sparsity^p\}$ is an ordered set with elements in increasing order (i.e., the first element is $i$).
\end{proposition}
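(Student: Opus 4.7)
The plan is to expand the KL divergence in closed form, separate out the optimization over $\tilde\bfmu$ from the optimization over $\tilde\bL$, and then exploit the column-wise decomposability afforded by the sparsity pattern $\sparsity^p$.

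First, I would substitute the Gaussian densities of $p(\mathbf{f}) = \normal_n(\bfmu,\bK)$ and $\tilde p(\mathbf{f}) = \normal_n(\tilde\bfmu,(\tilde\bL\tilde\bL^\top)^{-1})$ into $\KL(p \| \tilde p)$ and drop any terms independent of the optimization variables. This reduces the objective, up to constants, to
\begin{equation}
(\bfmu - \tilde\bfmu)^\top \tilde\bL\tilde\bL^\top (\bfmu - \tilde\bfmu) + \tr(\tilde\bL\tilde\bL^\top \bK) - 2 \sum_{i=1}^n \log \tilde\bL_{ii} .
\end{equation}
The first term is nonnegative and vanishes at $\tilde\bfmu = \bfmu$, giving the mean of $\hat p$ immediately.

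Next, for the factor $\tilde\bL$ I would use that its $i$th column is supported on $\sparsity^p_i$, and therefore
\begin{equation}
\tr(\tilde\bL \tilde\bL^\top \bK) = \sum_{i=1}^n \tilde\bL_{\sparsity^p_i,i}^\top \, \bK_{\sparsity^p_i,\sparsity^p_i} \, \tilde\bL_{\sparsity^p_i,i} .
\end{equation}
Since the $\log\tilde\bL_{ii}$ term is also a sum over $i$, the objective decouples across columns. Writing $\bv \coloneqq \tilde\bL_{\sparsity^p_i,i}$ and using that $i$ is the first (smallest) element of the ordered set $\sparsity^p_i$, so $\tilde\bL_{ii} = v_1$, each subproblem becomes
\begin{equation}
\min_{\bv \in \mathbb{R}^{|\sparsity^p_i|},\, v_1 > 0} \; \bv^\top \bK_{\sparsity^p_i,\sparsity^p_i} \bv - 2\log v_1 .
\end{equation}

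Setting the gradient to zero yields $\bK_{\sparsity^p_i,\sparsity^p_i} \bv = v_1^{-1} \be_1$, so $\bv = v_1^{-1} \bb_i$ with $\bb_i = \bK^{-1}_{\sparsity^p_i,\sparsity^p_i} \be_1$. Reading off the first coordinate gives $v_1 = v_1^{-1} \bb_{i,1}$, i.e., $v_1 = \bb_{i,1}^{1/2}$ (positive root chosen for $\tilde\bL$ to be a valid Cholesky factor), hence $\bv = \bb_i \, \bb_{i,1}^{-1/2}$, as claimed. Positive definiteness of $\bK_{\sparsity^p_i,\sparsity^p_i}$ ensures $\bb_{i,1} = \be_1^\top \bK^{-1}_{\sparsity^p_i,\sparsity^p_i} \be_1 > 0$ and makes the subproblem strictly convex in $\bv$, so the critical point is the unique minimizer. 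The stated $\order(|\sparsity^p_i|^3)$ cost follows because $\bb_i$ is obtained by solving one linear system of size $|\sparsity^p_i|$.

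The only subtle point, and arguably the main thing to get right, is the bookkeeping that $i$ is the first index of $\sparsity^p_i$ under the ordering convention, so that the diagonal entry $\tilde\bL_{ii}$ appearing in $\log\det$ coincides with $v_1$; this is what couples the two terms and produces the $\bb_{i,1}^{-1/2}$ normalization. Everything else is standard Gaussian KL algebra and a one-variable first-order condition.
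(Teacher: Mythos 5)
Your proof is correct, and its first step---separating the mean term, noting it is nonnegative and vanishes at $\tilde\bfmu=\bfmu$ independently of $\tilde\bL$---is exactly what the paper does. The difference is that for the covariance part the paper simply invokes Theorem~2.1 of \citet{Schafer2020}, whereas you re-derive that result from scratch: expanding $\tr(\tilde\bL\tilde\bL^\top\bK)$ column-by-column using the support constraint, decoupling the objective into independent subproblems $\min_{\bv}\,\bv^\top \bK_{\sparsity^p_i,\sparsity^p_i}\bv - 2\log v_1$, and solving the first-order condition to obtain $\bv = \bb_i(\bb_{i,1})^{-1/2}$. Your bookkeeping that $i$ is the first element of $\sparsity^p_i$ (so $\tilde\bL_{ii}=v_1$), the positivity of $\bb_{i,1}$ from positive definiteness, and the strict convexity argument guaranteeing uniqueness are all sound, so your version is a self-contained substitute for the citation at the cost of a somewhat longer argument.
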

Throughout, we denote by $\be_i$ a vector whose $i$th entry is one and all others are zero, and we index matrices before inverting so that $\bK^{-1}_{\sparsity^p_i,\sparsity^p_i} \colonequals (\bK_{\sparsity^p_i,\sparsity^p_i})^{-1}$.

The approximation in Proposition \ref{prop:prior} is equivalent to an ordered conditional approximation \citep{Vecchia1988} of the prior density $p(\mathbf{f}) = \prod_{i=1}^n p(f_i|\mathbf{f}_{(i+1):n})$ by:
\[
\textstyle
\hat{p}(\mathbf{f}) = \prod_{i=1}^n p(f_i|\mathbf{f}_{s_i^p}) = \prod_{i=1}^n \normal(f_i | \eta_i , \sigma^2_i),
\]
where
$
\eta_i = \bfmu_i - \hat\bL^{\top}_{s_i^p,i} ( \mathbf{f}_{s_i^p} - \bfmu_{s_i^p} ) / \hat\bL_{i,i}
$
and
$
\sigma^2_i = \hat\bL_{i,i}^{-2}
$,
with $s_i^p = \sparsity^p_i \setminus \{i\}$.

\subsection{Computing the ELBO based on Ancestor Sets\label{sec:ancestors}}

Plugging $\hat{p}(\mathbf{f})$ into \eqref{eq:ELBOmin}, the ELBO in \eqref{eq:ELBO} becomes
\begin{align}
    \displaystyle
    \ELBO(q) 
    ={} &\sum_{i=1}^n \big(\bbE_{q} \log p(y_i|f_i) - ( (\bfnu-\bfmu)^\top \hat\bL_{:,i} )^2/2 \\
    &+ \log (\bV_{ii}^{-1} \hat\bL_{ii}) - \|\bV^{-1}\hat\bL_{:,i}\|^2/2 \big),
    \label{eq:ELBOhat}
\end{align}
with the $i$th summand depending on $\hat\bL$ only via its $i$th column $\hat\bL_{:,i}$, whose nonzero entries can be computed in $\order(|\sparsity^p_i|^3)$ time using \eqref{eq:priorklmin}. 

We need to compute $\bV^{-1}\hat\bL_{:,i}$ and $\bV^{-1}\be_i$, the latter of which appears in $\bbE_{q} \log p(y_i|f_i)$ (see Section \ref{sec:optim}). The nonzero entry of $\be_i$ (i.e., $\{i\}$) is a subset of the nonzero entries of $\hat\bL_{:,i}$ (i.e., $\sparsity^p_i$), and hence we focus our discussion on computing $\bV^{-1}\hat\bL_{:,i}$. Solving this sparse triangular system in principle requires $\order(|\sparsity^q|)$ time.

However, it is possible to speed up computation by omitting rows and columns of $\bV$ that do not correspond to the ancestor set $\anc_i$ of $\sparsity^p_i$ with respect to $\sparsity^q$, which is defined as $\anc_i = \big\{j \ge i: \text{ there exists a path } \mathcal{L} = \{(j, l_1),(l_1,l_2),\ldots,(l_{a-1},l_a),(l_a,l) \} \subset \sparsity^q \text{ for some } l \in \sparsity^p_i\big\}$. Ancestor sets are properties of the directed acyclic graphs that can be used to represent our triangular sparsity structures, as illustrated in Appendix \ref{app:graph}.

\begin{proposition}\label{prop:ancestor}
$(\bV^{-1}\hat\bL_{:,i})_j = 0$ for all $j \notin \anc_i$.
\end{proposition}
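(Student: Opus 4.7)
The plan is to prove the contrapositive via forward substitution: I will solve the triangular system $\bV \bx = \hat\bL_{:,i}$ row-by-row and show by strong induction on $r$ that $x_r \neq 0$ forces $r \in \anc_i$. The key structural facts I will use are that $\bV$ is lower-triangular with sparsity pattern $\sparsity^q$ (so $\bV_{r,s} \neq 0$ only if $(r,s) \in \sparsity^q$, and in particular $s \leq r$), that $\hat\bL_{:,i}$ has nonzero entries only on $\sparsity^p_i$, and that $(i,i) \in \sparsity^q$ and $i \in \sparsity^p_i$ by the standing conventions on the two sparsity patterns.

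First I would write the forward-substitution identity
\begin{equation}
\bV_{r,r}\,x_r \;=\; (\hat\bL_{:,i})_r \;-\; \sum_{\substack{s < r\\ (r,s)\in\sparsity^q}} \bV_{r,s}\, x_s,
\end{equation}
which is valid for all $r$. As an immediate consequence, for $r < i$ the right-hand side vanishes (the first term is zero because $r \notin \sparsity^p_i$, and by induction the summed $x_s$ are all zero), giving $x_r = 0$; this is consistent with the requirement $j \geq i$ in the definition of $\anc_i$.

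Next I would run the inductive step. Fix $r \geq i$ and assume the claim has been established for every $r' < r$ with $r' \geq i$. Suppose $x_r \neq 0$. Then the displayed identity forces at least one of two things to happen. Case~(a): $(\hat\bL_{:,i})_r \neq 0$, hence $r \in \sparsity^p_i$; then $(r,r)\in\sparsity^q$ supplies the length-one path $\{(r,r)\}$ with $l=r\in\sparsity^p_i$, so $r\in\anc_i$. Case~(b): there exists $s < r$ with $(r,s)\in\sparsity^q$ and $x_s\neq 0$; then $s \geq i$ (by the previous paragraph), so the induction hypothesis yields a path $(s,l_1),\ldots,(l_a,l)\subset\sparsity^q$ ending at some $l\in\sparsity^p_i$. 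Prepending the edge $(r,s)$ produces a path from $r$ to $l$, showing $r\in\anc_i$.

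I do not expect any serious obstacle: the argument is purely combinatorial once the forward-substitution recursion is written down, and the only small subtlety is to remember that diagonal entries lie in $\sparsity^q$ so that the ``path'' covering Case~(a) (and the base case $r=i$) is actually present in the graph. The contrapositive conclusion is exactly the statement of the proposition.
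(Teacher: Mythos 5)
Your proof is correct and uses essentially the same mechanism as the paper's: forward substitution on the sparse triangular system $\bV\bx = \hat\bL_{:,i}$, tracing how a nonzero entry can only arise from an edge of $\sparsity^q$ leading back (possibly through several steps) to an index in $\sparsity^p_i$. The paper instead writes out the expansion via the inverse of the trailing block $\bV_{i:n,i:n}$ and verifies the representative case $j=i+1$, while your strong induction on the row index makes the general step fully rigorous and also subsumes the $r<i$ base case that the paper handles by a separate block-triangular decomposition.
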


Thus, we have
\begin{equation}
\|\bV^{-1}\hat\bL_{:,i}\| =
\|\bV_{\anc_i,\anc_i}^{-1}\hat\bL_{\anc_i,i}\|,
\label{eq:ancestorelboterm}
\end{equation}
where $\bV_{\anc_i,\anc_i}^{-1}\hat\bL_{\anc_i,i}$ can be computed in $\order(|\anc_i| |\sparsity^q_i|)$ time.

\subsection{Maximin Ordering and Nearest-neighbor Sparsity\label{sec:maximin}}

\begin{figure*}[htbp]
\centering
	\begin{subfigure}{.33\textwidth}
	\centering
 	\includegraphics[width =.99\linewidth]{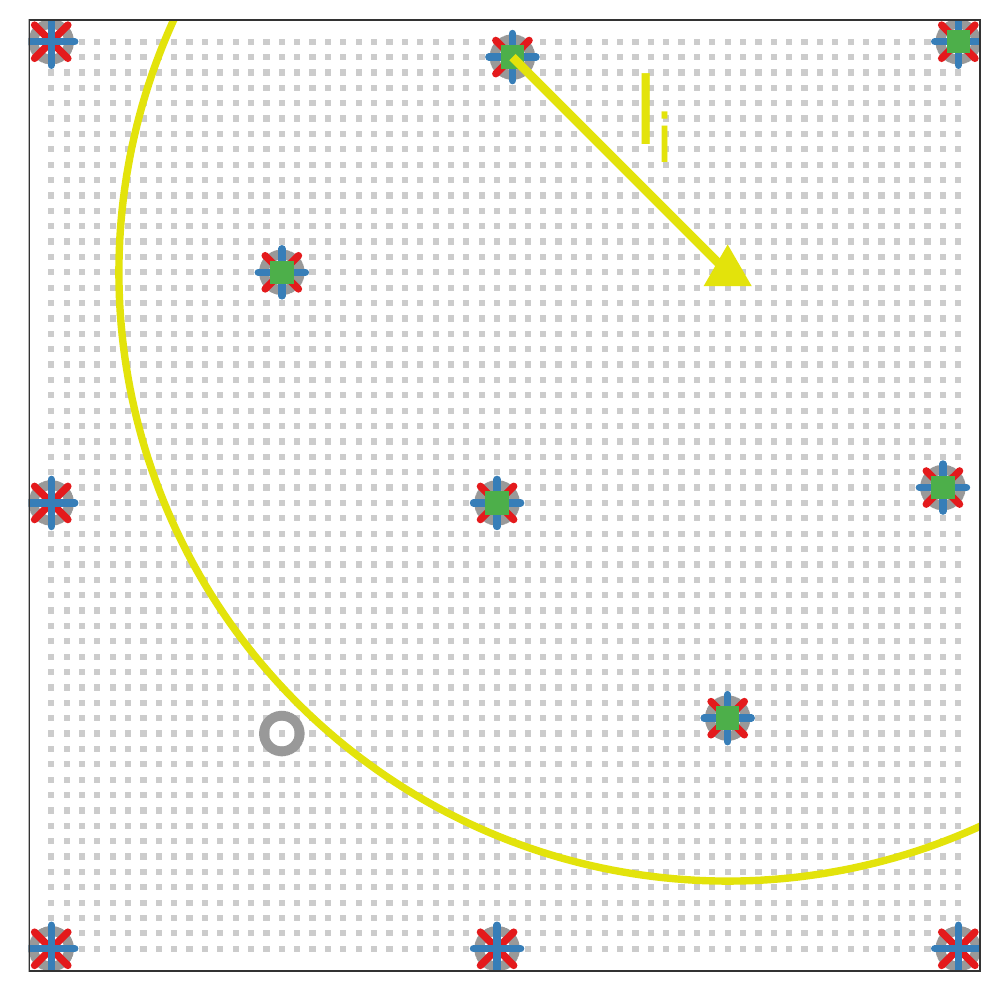}
	\caption{$i=n-12$}
	\label{fig:mm1}
	\end{subfigure}%
\hfill
	\begin{subfigure}{.33\textwidth}
	\centering
 	\includegraphics[width =.99\linewidth]{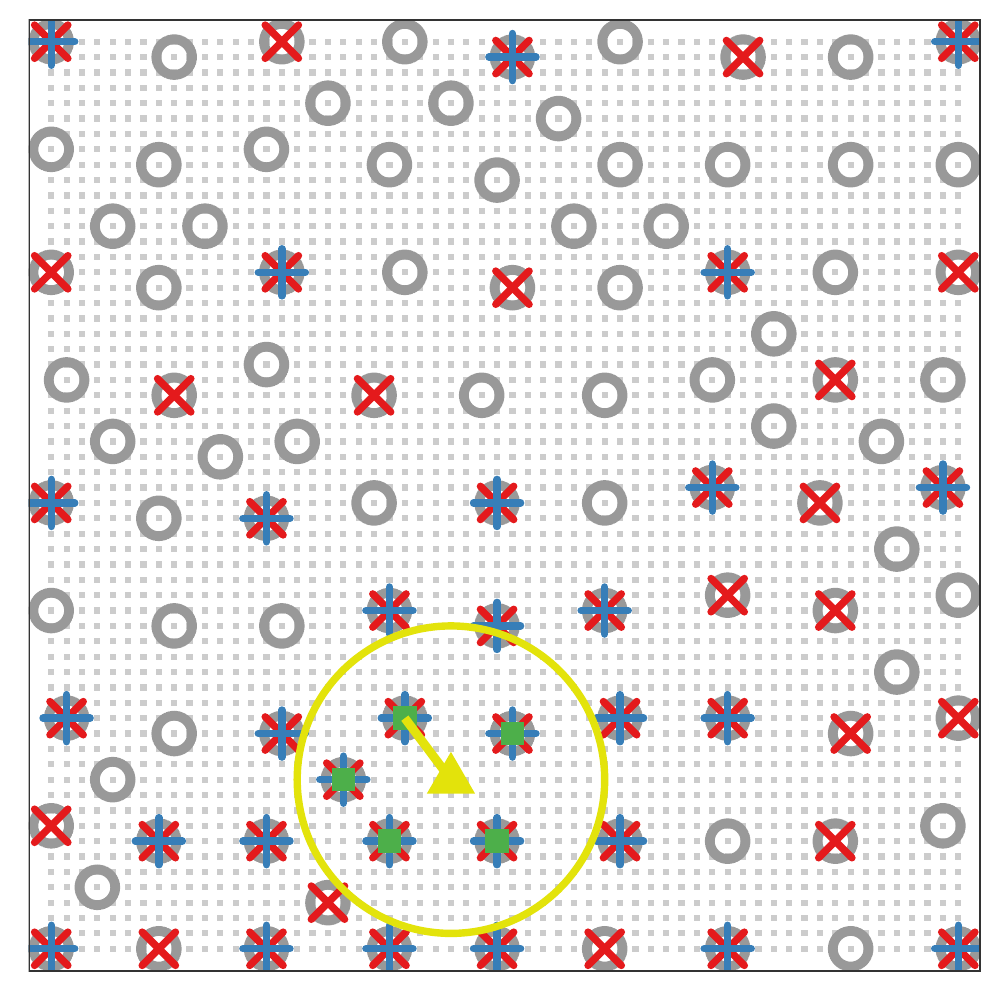}
	\caption{$i=n-100$}
	\label{fig:mm2}
	\end{subfigure}%
\hfill
	\begin{subfigure}{.33\textwidth}
	\centering
 	\includegraphics[width =.99\linewidth]{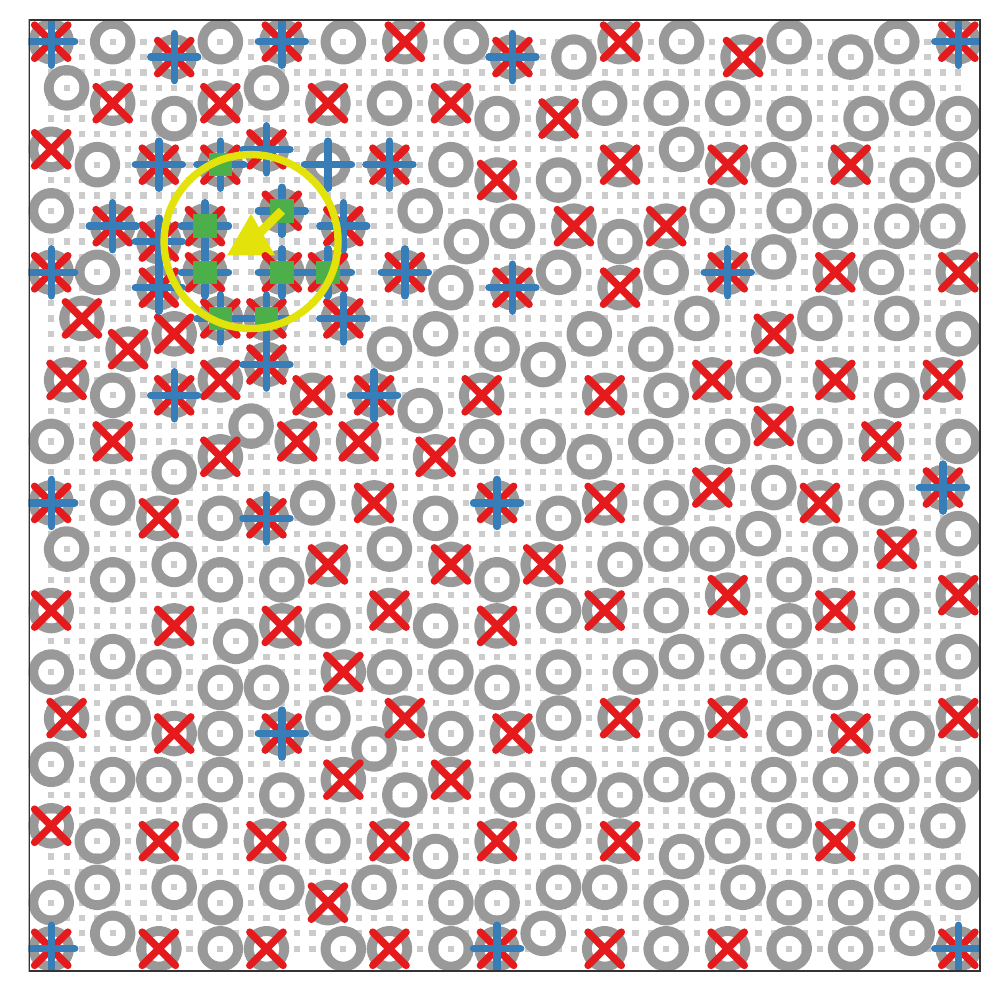}
	\caption{$i=n-289$}
	\label{fig:mm3}
	\end{subfigure}%
	
  \caption{Reverse maximin ordering on a grid (small gray dots) of size $n=60 \times 60 = 3{,}600$ on a square. For three different indices $i$, we show the $i$th ordered input (${\color[HTML]{e3e30b}\blacktriangle}$), the subsequently ordered $n-i$ inputs (${\color{black}\bm{\medcirc}}$), the distance $\ell_i$ to the nearest neighbor (${\color[HTML]{e3e30b}\boldsymbol{-}}$), the neighboring subsequent inputs $\sparsity_i$ (${\color[HTML]{4DAF4A}\blacksquare}$) within a (yellow) circle of radius $\rho \ell_i$ (here, $\rho=2$), the reduced ancestors $\tilde{\anc}_i$ (${\color[HTML]{377EB8}\bm{+}}$), and the ancestors $\anc_i$ (${\color[HTML]{E41A1C}\bm{\times}}$).
  }
\label{fig:maxmin}
\end{figure*}

\citet{Schafer2020} proposed a sparsity pattern $\sparsity$ based on reverse-maximum-minimum-distance (r-maximin) ordering (see Figure \ref{fig:maxmin} for an illustration). R-maximin ordering picks the last index $i_n$ arbitrarily (often in the center of the input domain), and then the previous indices are sequentially selected for $k = n-1, n-2, \ldots, 1$ as
$
i_k = \argmax_{i \, \notin \, \mathcal{I}_{k}} \,\, \min_{j \, \in \, \mathcal{I}_{k}} \text{dist}(\bx_i,\bx_j)
$, 
where $\mathcal{I}_{k} = \{i_{k+1} , \ldots , i_{n}\}$. 
Throughout, we assume that our indexing follows r-maximin ordering (e.g., $f_k = f_{i_k}$).
We can then define the sparsity pattern by $\sparsity_i =\{ j \geq i: \text{dist}(\bx_i,\bx_j)\leq \rho \ell_i\}$, for some fixed $\rho \geq 1$, where $\ell_{i} = \min_{j > i} \text{dist}(\bx_i,\bx_j)$. We can compute $\text{dist}(\bx_i,\bx_j)$ as Euclidean distance between the inputs, potentially in a transformed input space (see Section \ref{sec:optim} for more details).
The conditioning sets are all of approximately size $|\sparsity_i|=\mathcal{O}(\rho^d) \approx m = |\sparsity|/n$ under mild assumptions on the regularity of the inputs. \citet{Schafer2020} proved that an $\epsilon$-accurate approximation of the prior can be obtained using $\sparsity^p=\sparsity$ with $\rho = \order(\log (n / \epsilon))$ for kernels $K$ that are Green's functions of elliptic boundary-value problems (similar to Mat\'ern kernels up to boundary effects) and demonstrated high numerical accuracy of the posterior using $\sparsity^q=\sparsity$ for Gaussian likelihoods. For non-Gaussian likelihoods, this implies highly accurate approximations to the posterior when a second-order Taylor expansion can adequately approximate the posterior.

While this means that our DKLGP can achieve high accuracy by choosing $\sparsity^p = \sparsity^q = \sparsity$, the resulting ancestor sets can grow roughly linearly with $n$ (e.g., see Figure \ref{fig:sizes_n_rho}). Hence, even evaluating the ELBO based on the ancestor sets would often be prohibitively expensive for large $n$. 
However, it is possible to ignore most ancestors in \eqref{eq:ancestorelboterm} and only incur a small approximation error. 
Specifically, consider reduced ancestor sets $\tilde{\anc}_i = \{ j \geq i: \text{dist}(\bx_i,\bx_j)\leq \rho \ell_j\}$, where the last subscript is now a $j$, not an $i$. As illustrated in Figure~\ref{fig:maxmin}, we have $\sparsity_i \subset \tilde{\anc}_i$ (because $\ell_j \geq \ell_i$ for $j \geq i$) and approximately $\tilde{\anc}_i \subset \anc_i$. The reduced ancestor sets are of size $|\tilde{\anc}_i| = \order(\rho^d \log n) = \order(m \log n)$ and can all be computed together in $\order(nm\log^2 n)$ time \citep{Schafer2017}. Hence, reduced ancestor sets can be orders of magnitude smaller than full ancestor sets (see Figures \ref{fig:sizes_n_rho} and \ref{fig:sizes_rho}).

\begin{figure*}
\centering
\hfill
\begin{subfigure}{.33\textwidth}
\centering
\includegraphics[width =.99\linewidth]{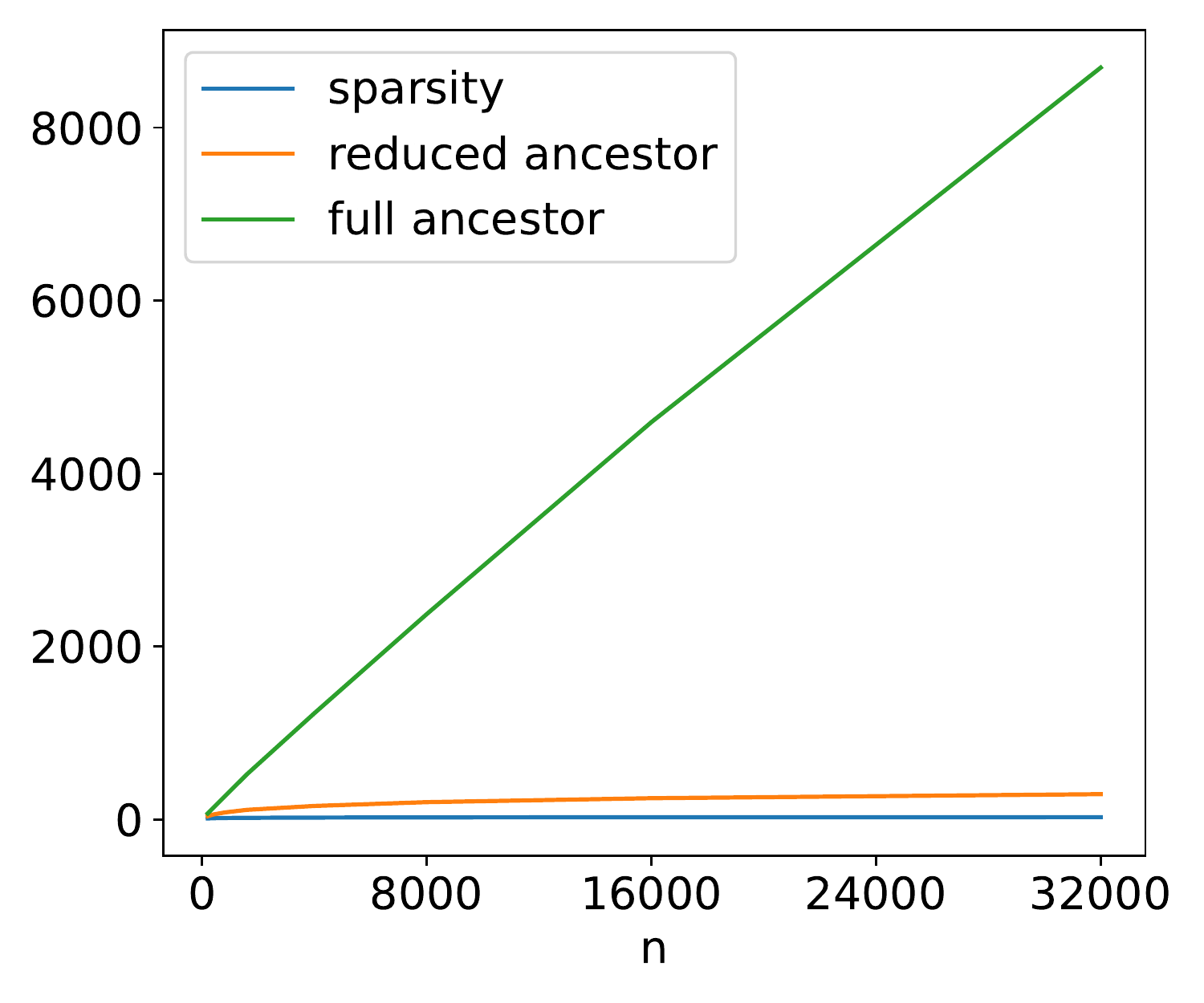}
\caption{Average sizes of $\sparsity_i$, $\tilde\anc_i$, and $\anc_i$}
\label{fig:sizes_n_rho}
\end{subfigure}
\hfill
\begin{subfigure}{.33\textwidth}
\centering
\includegraphics[width =.99\linewidth]{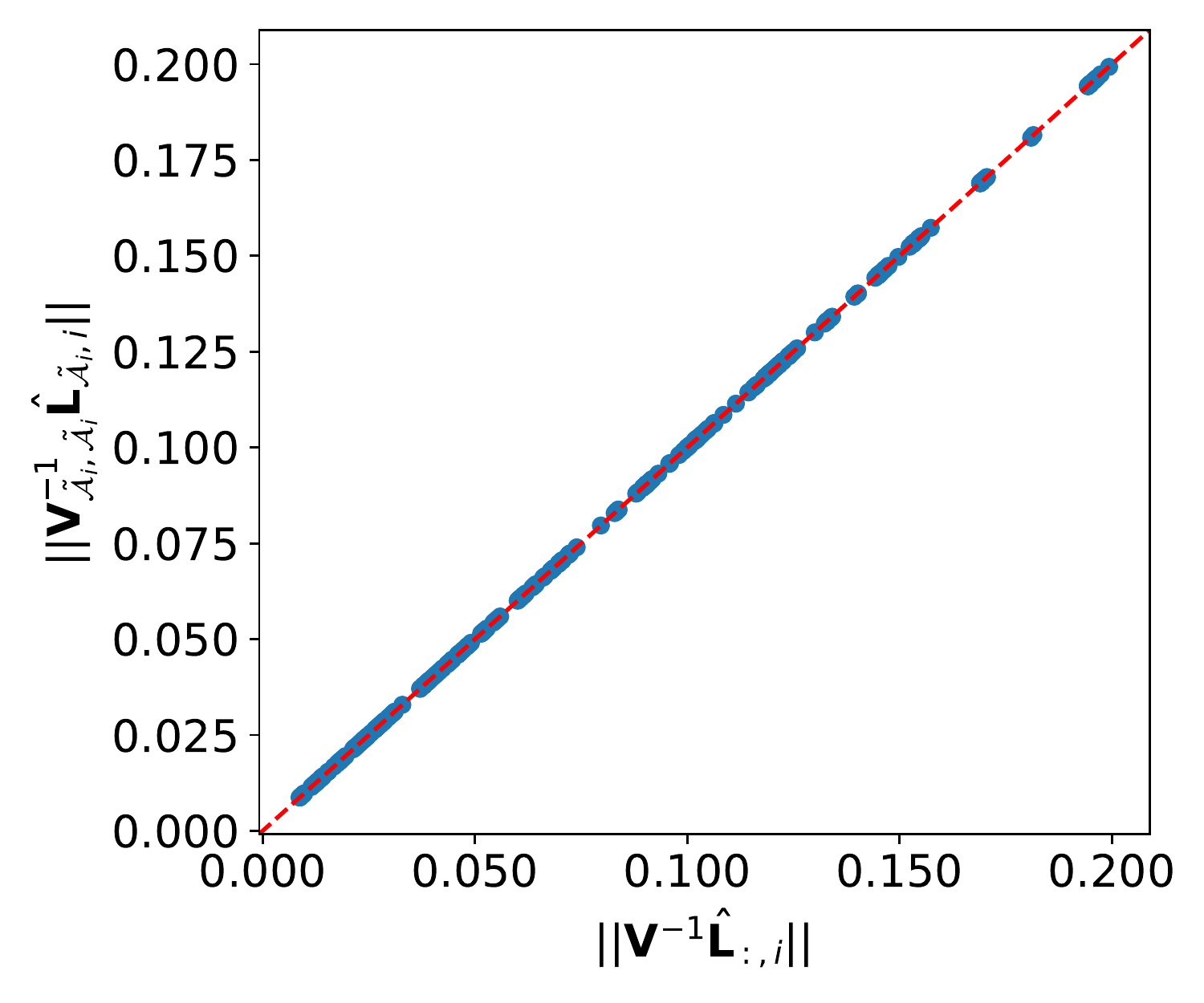}
\caption{$\|\bV_{\tilde\anc_i,\tilde\anc_i}^{-1}\hat\bL_{\tilde\anc_i,i}\|$ vs $\|\bV^{-1}\hat\bL_{:,i}\|$}
\label{fig:reducedsolve}
\end{subfigure}
\hfill
\begin{subfigure}{.33\textwidth}
\centering
\includegraphics[width =.99\linewidth]{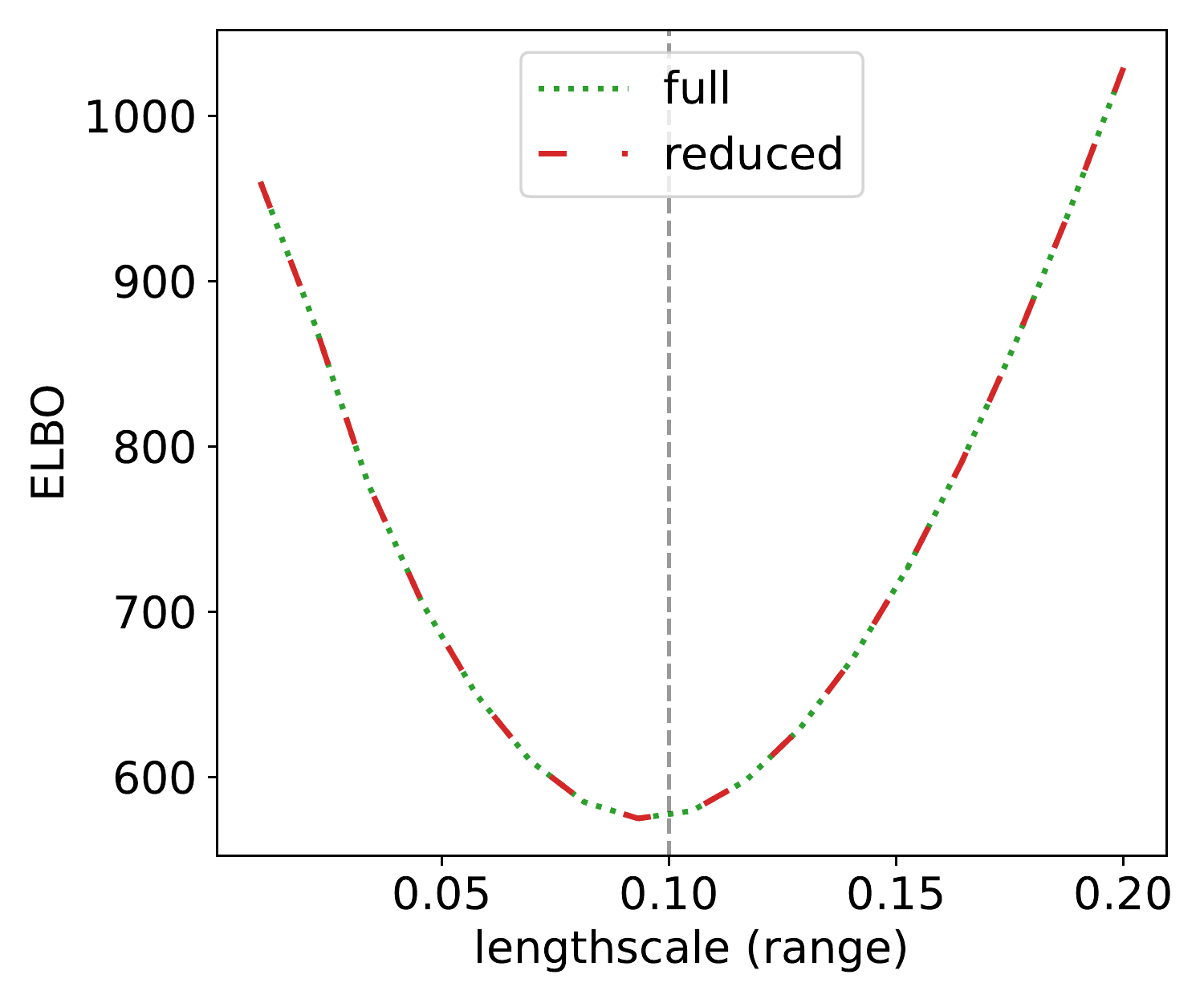}
\caption{ELBO using reduced ancestors}
\label{fig:ELBO_err}
\end{subfigure}
\hfill
\caption{Reduced ancestor sets are much smaller than full ancestor sets, as shown in (a), and hence greatly reduce computational cost, but result in negligible approximation error in the ELBO, as shown in (b) and (c). 
Specifically, (a) shows average sizes of the sparsity sets $\sparsity_i$, reduced ancestor sets $\tilde\anc_i$, and full ancestor sets $\anc_i$ as a function of $n$ with $d=5$; for $n=32{,}000$, we have $|\sparsity_i| = 30$, $|\tilde\anc_i| = 293$, and $|\anc_i| = 8{,}693$. 
(b) compares $\|\bV_{\tilde\anc_i,\tilde\anc_i}^{-1}\hat\bL_{\tilde\anc_i,i}\|$ with reduced ancestor sets versus $\|\bV^{-1}\hat\bL_{:,i}\|$  for $i = 1, \ldots, n$, where $n = 500$ and $d = 2$.
(c) compares ELBO curves based on full \eqref{eq:ELBOhat} and reduced \eqref{eq:ELBOreduced} ancestor sets, as functions of the range parameter with true value $0.1$, for $n=500$ and $d=2$. In all plots, we set $\rho=2$ and the $n$ inputs are sampled uniformly on $[0,1]^d$.}
\label{fig:effectofreduced}
\end{figure*}

\begin{claim}\label{claim:reducedancestor}
For Mat\'ern-type LGPs with exponential-family likelihoods,  $(\bV^{-1}\hat\bL_{:,i})_j \approx 0$ for all $j \notin \tilde\anc_i$, where $\bV$ minimizes the ELBO in \eqref{eq:ELBOhat}, under mild conditions.
\end{claim}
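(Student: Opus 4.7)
The plan is to characterize the ELBO minimizer $\bV$ via its first-order optimality condition and then invoke a screening-effect decay argument for the resulting matrix. First, I would differentiate \eqref{eq:ELBOhat} with respect to the entries of $\bV$. The $\|\bV^{-1}\hat\bL_{:,i}\|^2$ and $\log(\bV_{ii}^{-1}\hat\bL_{ii})$ contributions together produce a term whose stationary point alone would yield $\bV\bV^\top = \hat\bL\hat\bL^\top$. The likelihood terms $\bbE_q \log p(y_i|f_i)$ depend on $\bV$ only through the marginal variance $\|\bV^{-1}\be_i\|^2$ at index $i$ (via $\var_q f_i$), and for an exponential-family likelihood their derivative with respect to that variance is $\tfrac{1}{2}\bbE_q \partial^2 \log p(y_i|f_i)/\partial f_i^2$, whose negative I denote $g_i>0$. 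Assembling these, the ELBO optimality condition can be written as
\begin{equation}
\bV\bV^\top \;=\; \hat\bL\hat\bL^\top + \bG,
\label{eq:opt-proof-sketch}
\end{equation}
where $\bG$ is diagonal with entries $g_i$. Hence the optimal $\bV$ is the inverse Cholesky factor of the approximate posterior precision.

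The key step is then to establish, under the reverse maximin ordering, an entrywise decay bound for this Cholesky factor. Because $\bG$ is a bounded, uniformly positive diagonal perturbation, the operator $\bK^{-1}+\bG$ (and hence $\hat\bL\hat\bL^\top+\bG$ up to the prior SIC error) is still of the same elliptic type as $\bK^{-1}$ for Mat\'ern-type kernels viewed as Green's functions of elliptic boundary-value problems; adding $\bG$ merely introduces a reaction term, preserving the ellipticity and order. Extending the analysis of \citet{Schafer2020} to this perturbed operator yields that, for $j\ge i$,
\begin{equation}
|\bV_{j,i}| \;\lesssim\; \exp\!\bigl(-c\,\mathrm{dist}(\bx_i,\bx_j)/\ell_j\bigr),
\end{equation}
so that $|\bV_{j,i}|$ is exponentially small in $\rho$ whenever $j\notin\tilde\anc_i$, i.e., $\mathrm{dist}(\bx_i,\bx_j)>\rho\ell_j$.

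To finish, I would propagate this entrywise decay through the triangular solve. Since $\hat\bL_{:,i}$ is supported in $\sparsity^p_i\subseteq\tilde\anc_i$, the vector $\bV^{-1}\hat\bL_{:,i}$ can be computed by forward substitution using only rows and columns of $\bV$ indexed by the full ancestor set $\anc_i$ (Proposition \ref{prop:ancestor}). For an index $j\in\anc_i\setminus\tilde\anc_i$, every path reaching $j$ in the triangular system must traverse at least one entry of $\bV$ of the form $\bV_{j',i'}$ with $\mathrm{dist}(\bx_{i'},\bx_{j'})>\rho\ell_{j'}$, which by the decay bound above contributes a factor exponentially small in $\rho$. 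A telescoping bound on entries of $\bV^{-1}$ (of the Demko--Moss--Smith type for banded and graph-sparse matrices) then gives $|(\bV^{-1}\hat\bL_{:,i})_j| = \order(e^{-c\rho})$ for $j\notin\tilde\anc_i$, which is the claim.

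The main obstacle is making step two rigorous: lifting the Schäfer--Katzfuss screening-effect decay from $\bK^{-1}$ to the perturbed precision $\bK^{-1}+\bG$. The "mild conditions" are essentially uniform bounds $0<g_{\min}\le g_i\le g_{\max}<\infty$, which hold for standard exponential-family likelihoods when the variational means lie in a bulk region of the parameter space; given such bounds, the perturbed operator inherits the Green's-function estimates needed for the Cholesky decay, but tracking the dependence of constants on $g_{\min},g_{\max}$ and on the SIC prior approximation error is the technical core of the argument.
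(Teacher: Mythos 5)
Your first step agrees with the paper's: the (unconstrained) stationarity condition of the ELBO gives $\bV\bV^\top = \hat\bL\hat\bL^\top + \bG$ with $\bG$ the diagonal negative curvature of the likelihood ($\bG=\bR^{-1}$ for Gaussian noise; the paper invokes \citet{Nickisch2008} for general exponential-family likelihoods), and you derive this more explicitly than the paper does. The gap is in how you pass from there to decay of $\bV^{-1}\hat\bL_{:,i}$. Entrywise decay of $\bV$ itself is not the relevant object: the ELBO minimizer is constrained to $\bV\in\sparsity^q$, so $\bV_{j,i}$ is exactly zero outside the sparsity pattern, and the quantity that must be controlled is the dense vector $\bV^{-1}\hat\bL_{:,i}$. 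More importantly, your mechanism for transferring decay from $\bV$ to $\bV^{-1}$ fails: in the forward substitution, every nonzero entry of $\bV$ that a DAG path traverses lies inside $\sparsity^q$ and hence satisfies $\mathrm{dist}(\bx_{j'},\bx_{i'})\le\rho\ell_{i'}$ --- these are all short hops, none individually small in $\rho$, and a long chain of short hops is precisely how $\anc_i$ comes to contain points far outside $\tilde\anc_i$. So the premise that ``every path must traverse at least one exponentially small entry'' is false. A Demko--Moss--Smith-type bound on $\bV^{-1}$ does not rescue this, because such bounds degrade with the condition number of $\bV$, which grows polynomially in $n$ for Mat\'ern-type kernels; the content of the screening-effect analysis is exactly that decay in the rescaled distance $\mathrm{dist}(\bx_j,\bx_i)/\ell_j$ survives despite the poor conditioning, and that cannot be extracted from a generic inverse-decay theorem.

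The paper avoids inverting a decaying matrix altogether. Writing $\bfSigma=(\bV\bV^\top)^{-1}$ for the (approximate) posterior covariance and letting $\bP^{\updownarrow}$ be the order-reversing permutation, the matrix $\bP^{\updownarrow}\bV^{-\top}\bP^{\updownarrow}$ is lower triangular and satisfies $\bP^{\updownarrow}\bfSigma\bP^{\updownarrow}=\bigl(\bP^{\updownarrow}\bV^{-\top}\bP^{\updownarrow}\bigr)\bigl(\bP^{\updownarrow}\bV^{-\top}\bP^{\updownarrow}\bigr)^{\top}$, i.e.\ it is the Cholesky factor, in maximin ordering, of a zero-order perturbation of an elliptic Green's function, for which \citet{Schafer2017} gives the decay $\log|(\bV^{-1})_{ji}|\lessapprox\log(n)-\mathrm{dist}(\bx_j,\bx_i)/\ell_j$ directly. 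Combining this with the analogous decay of $\hat\bL$ via a product lemma (\citealp[Lemma~5.19]{Schafer2017}: products of matrices decaying in $\mathrm{dist}/\ell$ inherit the decay) yields the claim. To repair your argument, replace your second and third steps by this identification of the order-reversed $\bV^{-\top}$ as a covariance Cholesky factor; your perturbation discussion of $\bK^{-1}+\bG$ and your proposed ``mild conditions'' on $\bG$ then slot in naturally.
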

We provide a non-rigorous justification for this claim in Appendix~\ref{app:proofs}.
Together, Proposition \ref{prop:ancestor} and Claim \ref{claim:reducedancestor} imply that $\|\bV^{-1}\hat\bL_{:,i}\| \approx \|\bV_{\tilde\anc_i,\tilde\anc_i}^{-1}\hat\bL_{\tilde\anc_i,i}\|$ (as illustrated in Figure~\ref{fig:reducedsolve}), and so replacing the former by the latter in the ELBO causes negligible error (Figure~\ref{fig:ELBO_err}). Similar numerical results were obtained for two other popular kernels in Figures~\ref{fig:effectofreduced_RBF} and \ref{fig:effectofreduced_RQ} in Appendix~\ref{app:additional}, suggesting that our approach is applicable to beyond the Mat\'ern family.

\subsection{Optimization of the ELBO\label{sec:optim}}

The class of distributions $\mathcal{Q} = \{\normal_n(\bfnu,(\bV\bV^\top)^{-1}): \bfnu \in \mathbb{R}^n, \bV \in \mathbb{R}^{n\times n}, \bV \in \sparsity^q\}$ has $n$ parameters in $\bfnu$ and $|\sparsity|$ parameters in $\bV$. We propose to find the optimal $\hat{q} \in \mathcal{Q}$ by minimizing our approximation of $-\ELBO(q)$ with respect to these $\order(nm)$ unknown parameters via minibatch stochastic gradient descent. For each minibatch $\mathcal{B}$, this requires computing the gradient of
\begin{equation}\label{eq:ELBOreduced}
\begin{split}
    \sum_{i \in \mathcal{B}} \big( &\bbE_{q} \log p(y_i|f_i) - ( (\bfnu-\bfmu)^\top \hat\bL_{:,i} )^2/2 \\
    &+ \log (\bV_{ii}^{-1} \hat\bL_{ii}) - \|\bV_{\tilde\anc_i,\tilde\anc_i}^{-1}\hat\bL_{\tilde\anc_i,i}\|^2/2 \big) 
    \end{split} 
\end{equation}
using automatic differentiation.

For Gaussian observations with $y_i|f_i \sim \normal(f_i,\tau^2_i)$, we have
$ -2\bbE_{q} \log p(y_i|f_i) 
= \big( (y_i - \bfnu_i)^2 + \|\bV^{-1}\be_i\|^2 \big) /\tau_i^2 + \log \tau_i^2 + \log 2\pi$.
For more general distributions $p(y_i|f_i)$, we can use the Monte Carlo gradient estimator \citep{Kingma2014} and approximate
$ \bbE_{q} \log p(y_i|f_i) \approx (1/L) \sum_{l=1}^L p(y_i|f_i^{(l)})$,
where $f_i^{(l)} = \bfnu_i + (\bV^{-1}\be_i)^\top\bz^{(l)}$, $\bz^{(l)}  \stackrel{iid}{\sim} \normal_n(\bfzero,\bI_n)$, and $\bI_n$ is the $n \times n$ identity matrix.

Evaluating each summand in \eqref{eq:ELBOreduced} requires $\order(|\sparsity_i|^3) = \order(m^3)$ time for obtaining $\hat\bL_{:,i}$ and $\order(m^2 \log n)$ time for solving $\bV_{\tilde\anc_i,\tilde\anc_i}^{-1}\hat\bL_{\tilde\anc_i,i}$, because $|\tilde{\anc}_i| = \order(m \log n)$. The $\order(m^3)$ cost dominates, as we typically need $m = \order(\log^d n)$ for accurate approximations \citep{Schafer2020}; for example, in Figure~\ref{fig:sizes_n_rho}, $|\tilde\anc_i| |\sparsity_i|$ is smaller than $|\sparsity_i|^3$. Also, $\hat\bL$ does not need to be pre-computed and stored, as each column $\hat\bL_{:,i}$ can be computed ``on-the-fly''; this is especially useful for hyperparameter estimation, for which $p(\mathbf{f})$ and hence $\hat\bL$ changes with the hyperparameters at each gradient-descent iteration.

We initialize the optimization using an estimate of $\bfnu$ and $\bV$ based on a Vecchia-Laplace approximation \citep{Zilber2019} of $p(\mathbf{f} | \by)$ combined with an efficient incomplete Cholesky (IC0) approximation \citep{Schafer2020} of the posterior SIC factor.
While this initialization itself provides a reasonable approximation to the posterior, hyperparameter estimation for this approach is more difficult, and it is less accurate than DKLGP even for known hyperparameters as shown in Figure \ref{fig:post_mean_opt} in Appendix \ref{app:additional}.

The ordering and sparsity pattern in Section \ref{sec:maximin} depend on a distance metric, $\text{dist}(\bx_i,\bx_j)$, between inputs. We have found that the accuracy of the resulting approximation can be improved substantially by computing the Euclidean distance between inputs in a transformed input space in which the GP kernel is isotropic, as suggested by \citet{Katzfuss2020,Kang2021}. For example, consider an automatic relevance determination (ARD) kernel of the form $K(\bx_i,\bx_j) = K_{o} (q(\bx_i,\bx_j))$, where $K_{o}$ is an isotropic kernel (e.g., a Mat\'ern kernel with smoothness $1.5$ is used throughout this paper) and $q(\bx_i,\bx_j) = \| \bx^{\bflambda}_i - \bx^{\bflambda}_j \|$ is a Euclidean distance based on scaled inputs $\bx^{\bflambda} = (x_1/\lambda_1,\ldots,x_d/\lambda_d)$ with individual ranges or length-scales $\bflambda = (\lambda_1,\ldots,\lambda_d)$ for the $d$ input dimensions. In this example, we take $\text{dist}(\bx_i,\bx_j) = q(\bx_i,\bx_j)$ when computing the sparsity pattern.
When the scaled distance and hence the sparsity pattern depend on unknown hyperparameters (e.g., $\bflambda$ in the ARD case), we carry out a two-step optimization procedure: First, we run our ELBO optimization for a few epochs based on the sparsity pattern obtained using an initial guess of $\bflambda$ to obtain a rough estimate of $\bflambda$, which we then use to obtain the final ordering and sparsity pattern and warm-start our ELBO optimization.

\subsection{Prediction}
\label{subsec:prediction}
An important task for (L)GP models is prediction at unobserved inputs, meaning that we want to obtain the posterior distribution of latent GP variables $\mathbf{f}^*$ at new inputs $\bx_1^*,\ldots,\bx_{n^*}^*$ given the data $\by$. To do so, we consider the joint posterior distribution of $\tilde{\mathbf{f}} = (\mathbf{f}^*,\mathbf{f})$, from which any desired marginal distribution can be computed. Since working with the joint covariance matrix $\tilde{\bK}$ is again computationally prohibitive, we make a joint SIC assumption on the posterior distribution of $\tilde{\mathbf{f}}$ (with the prediction variables ordered first) that naturally extends the SIC assumption for $\mathbf{f}$ in $q(\mathbf{f})$. For the exact posterior, we have $$p(\tilde{\mathbf{f}}|\by) = p(\mathbf{f}^*|\mathbf{f},\by) p(\mathbf{f}|\by) = p(\mathbf{f}^*|\mathbf{f}) p(\mathbf{f}|\by).$$ Similarly, we assume $q(\tilde{\mathbf{f}}) = q(\mathbf{f}^*|\mathbf{f}) q(\mathbf{f})$, where $q(\mathbf{f}) = \normal_n(\mathbf{f}|\bfnu,(\bV\bV^\top)^{-1})$ was obtained as described in previous sections, and $q(\mathbf{f}^*|\mathbf{f})$ is a sparse approximation of $p(\mathbf{f}^*|\mathbf{f})$. For $i=1,\ldots,n^*$, let $\sparsity^*_i \subset \{i,i+1,\ldots,n^*+n\}$ denote the $i$th sparsity set relative to the joint posterior.

We define the approximation to the joint posterior by the minimizer of the expected forward-KL divergence between $p(\mathbf{f}^*|\mathbf{f})$ and $q(\mathbf{f}^*|\mathbf{f})$ for given $\bfnu$ and $\bV$, that is,
\begin{equation}
    \label{eq:klprediction}
    \hat{q}(\tilde{\mathbf{f}}) = \argmin_{q(\tilde{\mathbf{f}}) \in \tilde{\mathcal{Q}}(\bfnu, \bV)} \bbE_{p} \Big[ \KL \big( p(\mathbf{f}^* | \mathbf{f}) \big\| q(\mathbf{f}^* | \mathbf{f}) \big) \Big] ,
\end{equation}
where 
\begin{align}
    \tilde{\mathcal{Q}}(\bfnu , \bV) = \{ &\normal_{n^*+n} ((\bfnu^*{}^\top,\bfnu^\top)^\top , (\bV^*,(\bfzero, \bV^\top)^\top)): \\ &\ \bfnu^* \in \mathbb{R}^{n^*} , \bV^* \in \mathbb{R}^{(n^*+n) \times n^*}, \bV^* \in \sparsity^* \}
\end{align}
and $\sparsity^* = \bigcup_{i=1}^{n^*} \{(j,i): j \in \sparsity^*_i \}$. The resulting approximation can be obtained efficiently:
\begin{proposition}\label{prop:prediction}
For given $\bfnu$, $\bV$, and $\sparsity^*$,
$
\hat{q}(\tilde{\mathbf{f}}) =
\normal_{n^*+n}(\tilde{\mathbf{f}}|\tilde\bfnu,(\tilde\bV\tilde\bV^\top)^{-1}),
$
where $\tilde\bfnu = (\hat{\bfnu}^*{}^\top,\bfnu^\top)^\top$, $\tilde\bV = (\hat{\bV}^*,(\bfzero, \bV^\top)^\top)$, $\hat{\bV}^*=(\hat{\bV}^{**}{}^\top,\hat{\bV}^{o*}{}^\top)^\top$,
\begin{equation}
\label{eq:predictklmin}
    \hat{\bV}^*_{\sparsity^*_i,i} = \bc_i (\bc_{i,1})^{-1/2}, \quad \text{ with } \bc_i = K(\sparsity^*_i,\sparsity^*_i)^{-1} \be_1,
\end{equation}
\begin{equation}
    \hat{\bfnu}^* = \bfmu^* - (\hat{\bV}^{**})^{-\top} \hat{\bV}^{o*}{}^{\top}(\bfnu-\bfmu),
\end{equation}
and $\bfmu^* = (\mu(\bx_1^*),\ldots,\mu(\bx_{n^*}^*))^\top$.
\end{proposition}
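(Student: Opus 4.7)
The plan is to reduce problem \eqref{eq:klprediction} to a joint forward-KL minimization and then decouple the resulting objective into column-wise sub-problems for $\bV^*$ plus a quadratic in $\bfnu^*$. By the chain rule of KL divergence applied to $p(\tilde{\mathbf{f}}) = p(\mathbf{f}^*|\mathbf{f})p(\mathbf{f})$ and $q(\tilde{\mathbf{f}}) = q(\mathbf{f}^*|\mathbf{f})q(\mathbf{f})$,
\[
\KL\big(p(\tilde{\mathbf{f}})\big\|q(\tilde{\mathbf{f}})\big) = \KL\big(p(\mathbf{f})\big\|q(\mathbf{f})\big) + \bbE_{p(\mathbf{f})}\big[\KL\big(p(\mathbf{f}^*|\mathbf{f})\big\|q(\mathbf{f}^*|\mathbf{f})\big)\big].
\]
Because $q(\mathbf{f})$ is held fixed in \eqref{eq:klprediction}, the first right-hand-side term is constant in $(\bfnu^*,\bV^*)$, so minimizing the expected forward-KL is equivalent to minimizing the joint $\KL(p(\tilde{\mathbf{f}})\|q(\tilde{\mathbf{f}}))$ over $\tilde{\mathcal{Q}}(\bfnu,\bV)$. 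A short Schur-complement calculation exploiting the block-triangular structure of $\tilde{\bV}$ (with top-left block $\bV^{**}$, bottom-left block $\bV^{o*}$, and bottom-right block $\bV$) also confirms that every member of $\tilde{\mathcal{Q}}(\bfnu,\bV)$ has $\mathbf{f}$-marginal equal to $\normal_n(\bfnu,(\bV\bV^\top)^{-1})$, so this reduction is consistent with the prescribed factorization.

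Next I would expand the joint Gaussian KL to obtain, up to an additive constant $C$ independent of the optimization variables,
\[
2\KL\big(p(\tilde{\mathbf{f}})\|q(\tilde{\mathbf{f}})\big) = \|\tilde{\bV}^\top(\tilde{\bfnu}-\tilde{\bfmu})\|^2 + \sum_{i=1}^{n^*+n}\big(\tilde{\bV}_{:,i}^\top\,\tilde{\bK}\,\tilde{\bV}_{:,i} - 2\log \tilde{\bV}_{ii}\big) + C,
\]
where $\tilde{\bfmu}=(\bfmu^{*\top},\bfmu^\top)^\top$ and $\tilde{\bK}$ is the joint prior covariance. The sum over $i$ decouples across columns of $\tilde{\bV}$, and the last $n$ columns are fixed at $(\bfzero,\bV^\top)^\top$; hence exactly the same column-wise argument that underlies Proposition~\ref{prop:prior} (Theorem~2.1 of \citet{Schafer2020}) applies independently to each of the first $n^*$ columns and yields the closed form \eqref{eq:predictklmin} for $\hat{\bV}^*_{\sparsity^*_i,i}$, with the relevant kernel submatrix $K(\sparsity^*_i,\sparsity^*_i)$ extracted from $\tilde{\bK}$.

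With $\hat{\bV}^*$ determined, only the quadratic mean term depends on $\bfnu^*$. Expanding via the block form of $\tilde{\bV}$ gives
\[
\|\tilde{\bV}^\top(\tilde{\bfnu}-\tilde{\bfmu})\|^2 = \big\|(\hat{\bV}^{**})^\top(\bfnu^*-\bfmu^*)+(\hat{\bV}^{o*})^\top(\bfnu-\bfmu)\big\|^2 + \|\bV^\top(\bfnu-\bfmu)\|^2,
\]
and since $\hat{\bV}^{**}$ is triangular with strictly positive diagonal (hence invertible), the first summand vanishes precisely at the stated $\hat{\bfnu}^* = \bfmu^* - (\hat{\bV}^{**})^{-\top}(\hat{\bV}^{o*})^\top(\bfnu-\bfmu)$, while the second summand is independent of $\bfnu^*$. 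The main subtlety is the extension of Proposition~\ref{prop:prior} to the setting where a subset of columns of the triangular factor is prescribed; this is immediate once the per-column decoupling of the covariance-dependent objective is made explicit, since each column's sub-problem depends only on its own nonzero entries and the corresponding principal submatrix of the kernel.
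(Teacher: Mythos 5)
Your proof is correct, but it reaches the column-wise objective by a different route than the paper. You first invoke the chain rule of KL divergence to rewrite $\bbE_{p}\big[\KL(p(\mathbf{f}^*|\mathbf{f})\,\|\,q(\mathbf{f}^*|\mathbf{f}))\big]$ as $\KL(p(\tilde{\mathbf{f}})\,\|\,q(\tilde{\mathbf{f}}))$ minus the constant $\KL(p(\mathbf{f})\,\|\,q(\mathbf{f}))$ (the Schur-complement check that every member of $\tilde{\mathcal{Q}}(\bfnu,\bV)$ has the prescribed $\mathbf{f}$-marginal is exactly the point that makes this legitimate), and then apply the standard joint Gaussian KL formula, whose covariance part decouples over the columns of the lower-triangular $\tilde{\bV}$ with the last $n$ columns held fixed. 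The paper instead works directly with the conditional densities: it writes $p(\mathbf{f}^*|\mathbf{f})$ and $q(\mathbf{f}^*|\mathbf{f})$ explicitly, expands the conditional KL into a quadratic form $(\bG\mathbf{f}+\bh)^\top(\bV^{**}\bV^{**\top})(\bG\mathbf{f}+\bh)$ plus a zero-mean KL, takes $\bbE_p$ of the quadratic, eliminates $\bfnu^*$ by setting $\bG\bfmu+\bh=\bfzero$, and only then recombines the trace terms into $\tr(\bV^{*\top}\tilde{\bK}\bV^{*})-\log\det(\bV^{**}\bV^{**\top})$ — the same column-decoupled objective you obtain, after which both arguments finish identically with the first-order condition yielding \eqref{eq:predictklmin} and the block computation yielding $\hat\bfnu^*$. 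Your route is the more economical one: it avoids computing $\bK_{*|o}$ and the expectation of the quadratic form, and it makes transparent that the prediction step is literally the same KL-optimal sparse-factor problem as Proposition~\ref{prop:prior} restricted to a subset of columns. The paper's route keeps the conditional structure of \eqref{eq:klprediction} explicit, which makes the interpretation of $q(\mathbf{f}^*|\mathbf{f})$ as an approximation of $p(\mathbf{f}^*|\mathbf{f})$ more visible, at the cost of extra algebra.
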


The posterior distribution of a desired summary, say $\ba^\top\tilde{\mathbf{f}}$ can then be computed as $q(\ba^\top\tilde{\mathbf{f}}) = \normal(\ba^\top\tilde\bfnu,\|\tilde\bV^{-1}\ba\|^2)$. In particular, the marginal posterior of $f^*_i$ can be obtained using $\ba=\be_i$ as $q(\be_i^\top\tilde{\mathbf{f}}) = \normal(\bfnu^*_i,\|\tilde\bV^{-1}\be_i\|^2)$.

We again consider an r-maximin ordering and nearest-neighbor sparsity pattern similar to above, but now conditioned on the prediction points being ordered first, and the training points ordered after (in the same ordering as before).
Once the prediction points are in this conditional r-maximin ordering, we can define
$$\ell_{i}^* = 
\min_{i < j \le n^*} \text{dist}(\bx^*_{i},\bx^*_j) \wedge \min_{1 \le j \le n} \text{dist}(\bx^*_{i},\bx_j)$$ 
and 
\begin{align}
    \sparsity_i^* = \{ j \geq i : \text{dist}(\bx_i^* &, \bx_j^*)\leq \rho \ell_i^*\} \\ &\cup \{ j + n^* : \text{dist}(\bx_i^* , \bx_j)\leq \rho \ell_i^*\}.
\end{align}
This ordering and sparsity pattern can be computed rapidly and was shown to lead to highly accurate approximations; more details can be found in \citet[Section 4.2.1]{Schafer2020}. 
Note that while computing the prediction variances can be expensive, we can again approximate $\|\tilde\bV^{-1}\be_i\| \approx \|\tilde\bV_{\tilde\anc_i^*,\tilde\anc_i^*}^{-1}\be_{i;\tilde\anc_i^*}\|$ using a reduced ancestor set 
\begin{align}
    \tilde\anc_i^* = \{ j \geq i: \text{dist}(\bx_i^* &, \bx_j^*)\leq \rho \ell^*_j\} \\ &\cup \{ j + n^* : \text{dist}(\bx_i^* , \bx_j)\leq \rho \ell^*_j\},
\end{align}
where the last subscript is a $j$, not an $i$.

\section{Numerical Comparisons\label{sec:numerical}}

\subsection{Experimental Setup\label{sec:setup}}

We compared the following approaches:
\begin{description}[itemsep=1pt,topsep=2pt,parsep=1pt]
\item[DKLGP:] Our method with r-maximin ordering and nearest-neighbor sparsity pattern
\item[DKL-G:] Same as DKLGP but with global sparsity pattern $\sparsity_i^p = \sparsity_i^q = \{1, \ldots, m\}$
\item[DKL-D:] Same as DKLGP but with diagonal sparsity pattern $\sparsity^q_i = \{i\}$
\item[SVIGP:] Stochastic variational GP proposed by \citet{Hensman2013}
\item[VNNGP:] Variational nearest neighbor GP proposed by \citet{Wu2022}
\end{description}
In figures and tables, we use abbreviated acronyms DKL, SVI, and VNN to save space. SVIGP and VNNGP are two state-of-the-art variational GP methods, while DKL-G and DKL-D are variants of our DKLGP that resemble SVIGP and VNNGP, respectively. SVIGP assumes independence in $\mathbf{f}$ conditional on $m$ global inducing variables. VNNGP scales up the inducing points to be equal to the observed input locations, ensuring computational feasibility by assuming that each conditions only on $m$ others a priori, combined with a mean-field approximation to the posterior. We used the GPyTorch \citep{gardner2018gpytorch} implementations of SVIGP and VNNGP. For DKL-G and DKL-D, one can easily see that $\anc_i = \sparsity_i^p$, and so reduced ancestor sets are not necessary. For all methods, computing a term in the ELBO requires $\order(m^3)$ time per sample. (Reusing Cholesky factors for all samples in a minibatch is straightforward for SVIGP; similar savings may also be possible for the other methods based on the supernode ideas suggested by \citealp{Schafer2020}.) Hence, $m$ can be viewed as a comparable complexity parameter that trades off computational speed (for small $m$) against accuracy (large $m$). Thus, for our numerical comparison, we aligned the $m$ for all methods with the average size of $\sparsity_i$ for a given $\rho$.

Throughout, we assumed $f(\cdot) \sim \GP(0,K)$, where $K$ is a Mat\'ern1.5 ARD kernel whose variance (set to one for simulations) and range (i.e., length-scale) parameters $\bflambda$ were estimated. We considered three different likelihoods $p(y_i | f_i)$:
\begin{description}[itemsep=1pt,topsep=2pt,parsep=1pt]
    \item[Gaussian:] $y_i | f_i \sim \normal(f_i,\sigma_{\epsilon}^2)$
    \item[Student-$t$:] $y_i | f_i \sim \mathcal{T}_{2}(f_i,\sigma_{\epsilon}^2)$ with 2 degrees of freedom
    \item[Bernoulli-logit:] $y_i | f_i \sim \mathcal{B}((1 + e^{-f_i})^{-1})$
\end{description}
The noise variance $\sigma_{\epsilon}^2$ was estimated from the data; for simulations, we used $\sigma_{\epsilon}^2 = 0.1^2$ except where specified otherwise.

For estimation of hyperparameters, the initial values for $\bflambda$, $\sigma_{\epsilon}^2$, and the variance in $K$ were all $0.25$. DKLGP and its variants ran the Adam optimizer for $35$ epochs. SVIGP and VNNGP used natural gradient descent and Adam, respectively, as their optimizer for $500$ epochs as suggested in \citet{Wu2022}. The minibatch size was $128$ and a multi-step scheduler with a scaling factor of $0.1$ was used for all methods.

\subsection{Visual Comparison in One Dimension}
\label{subsec:1D}

\begin{figure*}
\centering
    \hfill
	\begin{subfigure}{.48\textwidth}
	\centering
 	\includegraphics[trim={0 0cm 0 0}, clip, width =.99\linewidth]{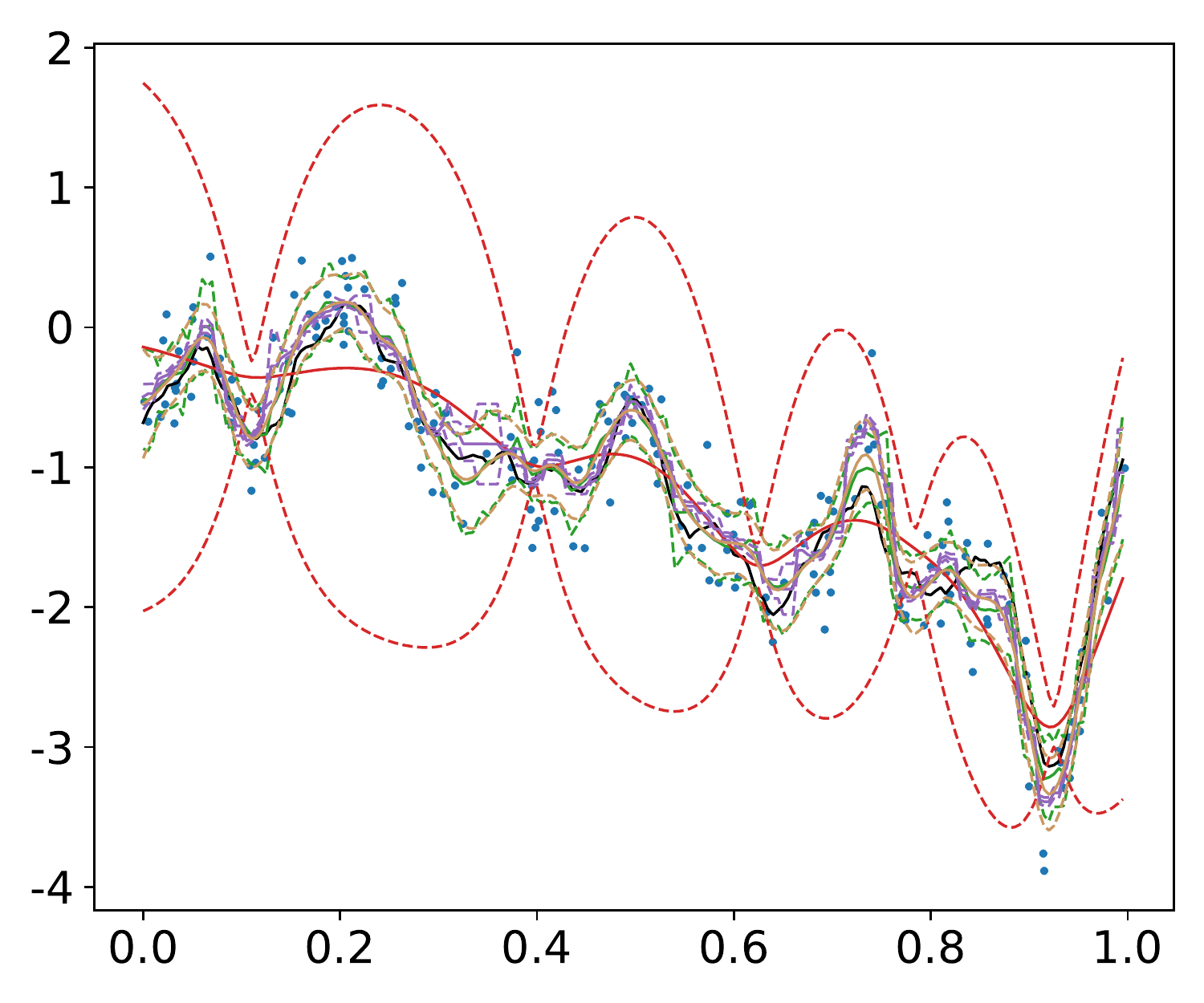}
	\end{subfigure}%
    \hfill
    \begin{subfigure}{.48\textwidth}
	\centering
 	\includegraphics[trim={0 0cm 0 0}, clip, width =.99\linewidth]{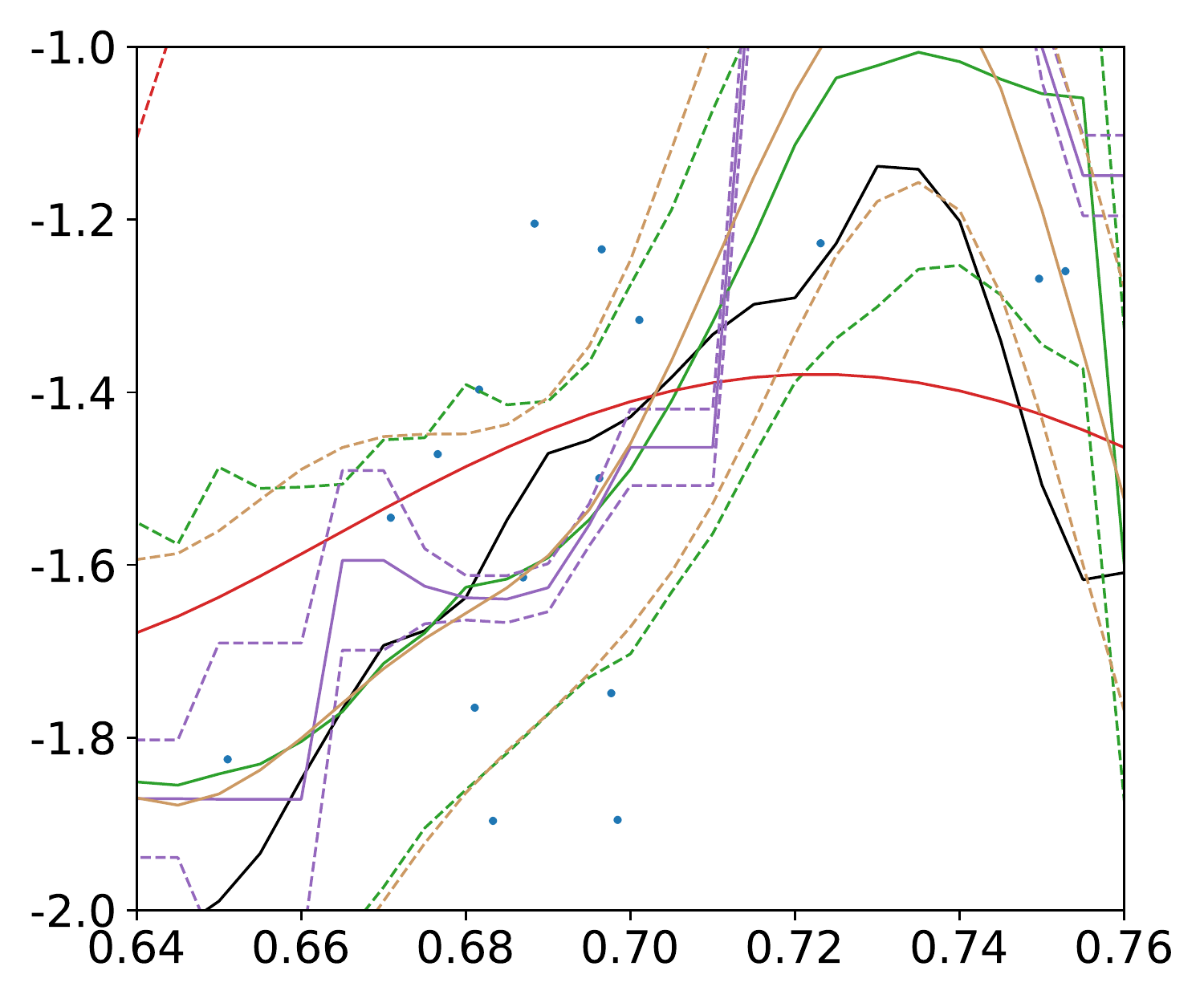}
	\end{subfigure}%
    \hfill
    
   \begin{subfigure}{0.6\textwidth}
	\centering
 	\includegraphics[width =.99\linewidth]{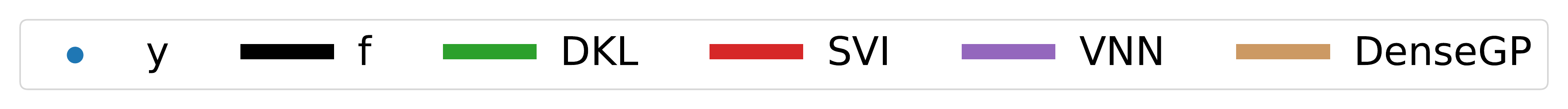}
	\end{subfigure}%
	
  \caption{Comparison of exact GP predictions (DenseGP) to three variational GP approximations for simulated data with Gaussian noise at $n=200$ randomly sampled training inputs on $[0,1]$ with $\sigma_{\epsilon} = 0.3$ and true range $\lambda=0.1$. We show the means (solid lines) and $95\%$ pointwise intervals of the posterior predictive distribution $\mathbf{f}^* | \by$ at 200 regularly spaced test inputs. The right panel zooms into a smaller region of the left panel to highlight the differences.}
\label{fig:1D_infer}
\end{figure*}

Figure~\ref{fig:1D_infer} provides a visual comparison of SVIGP, VNNGP, and DKLGP predictions for a toy example in one dimension. We also included predictions from the exact GP (DenseGP) which cannot be obtained for large $n$. DKLGP approximated the DenseGP most closely, especially in terms of the prediction intervals. SVIGP oversmoothed heavily and produced very wide prediction intervals. VNNGP assumes a diagonal covariance in the variational distribution $q(\mathbf{f})$, which appears to have caused sharply fluctuating predictions and narrow prediction intervals.
Figure~\ref{fig:1D_infer_part2} in Appendix~\ref{app:additional} shows similar comparisons for Student-$t$ and Bernoulli likelihoods.

\subsection{Results on Synthetic Data}
\label{subsec:sim_study}

We also carried out a more comprehensive comparison for $10{,}000$ inputs randomly distributed in the unit hypercube, $[0,1]^5$, with true range parameters $\bflambda = (0.25, 0.50, 0.75, 1.00, 1.25)$. We used $n=8{,}000$ inputs for training and $2{,}000$ for testing. Performance was measured in terms of the variational inference of the latent field $f(\cdot)$ at training and test inputs. For each scenario, results over five replicates were produced and averaged.
\begin{figure*}[htbp]
\centering
	\begin{subfigure}{.99\textwidth}
	\centering
 	\includegraphics[width =.99\linewidth]{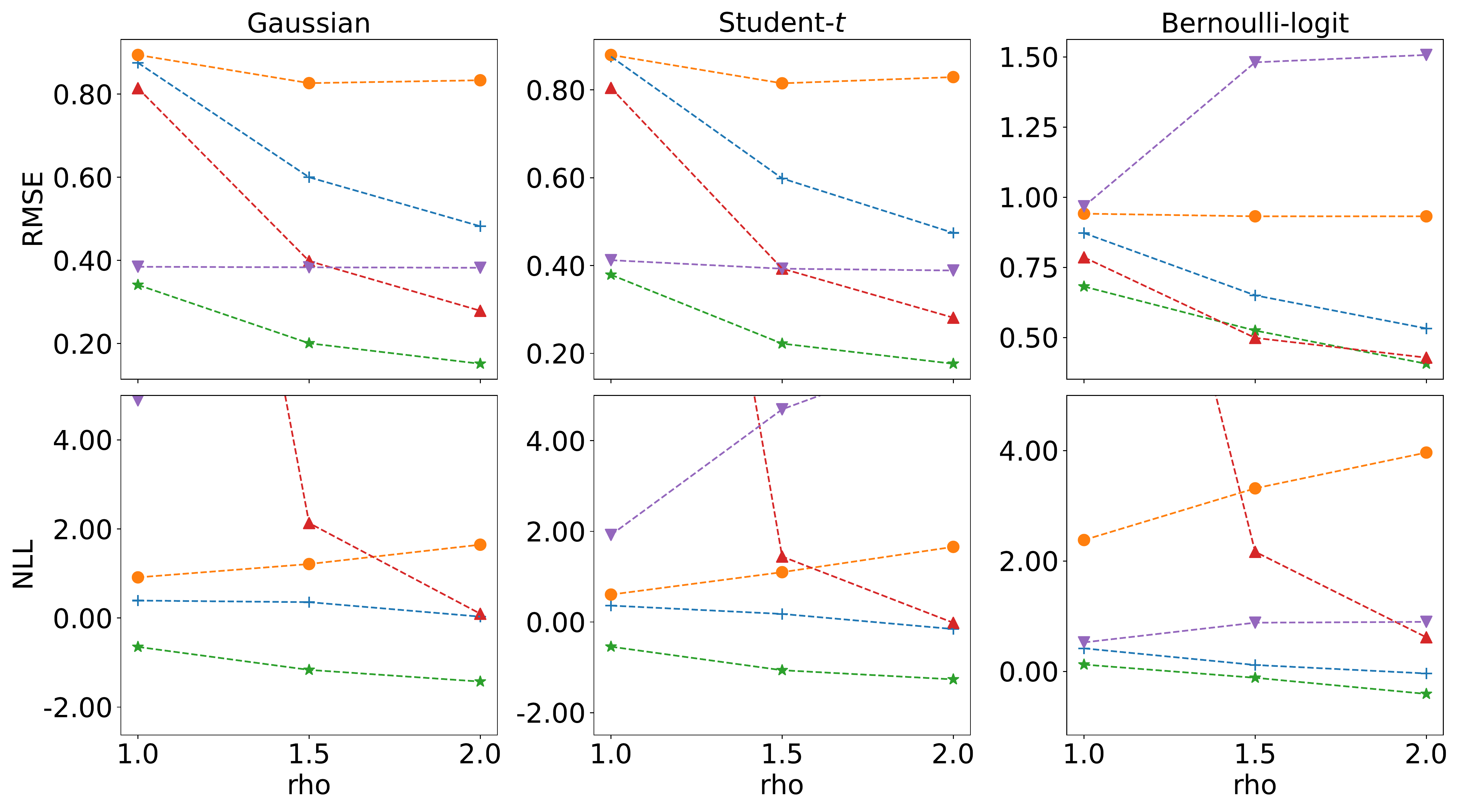}
	\end{subfigure}%

   \begin{subfigure}{0.6\textwidth}
	\centering
 	\includegraphics[width =.99\linewidth]{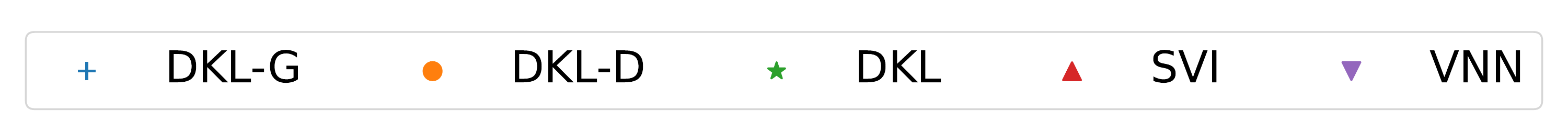}
	\end{subfigure}%
	
  \caption{RMSE (top) and NLL (bottom) for predicting the latent field at test inputs for simulated data in a five-dimensional input domain, as a function of the complexity parameter $\rho$, with Gaussian (left), Student-$t$ (center) and Bernoulli-logit (right) likelihoods}
\label{fig:outsample_f}
\end{figure*}

\begin{table*}[htb]
    \caption{RMSE and NLL at held-out test points averaged over five splits for several UCI datasets, ordered from low to high dimension $d$. The Student-$t$ and Bernoulli-logit likelihoods were used for PRECIP and COVTYPE, respectively; a Gaussian likelihood was used for the other datasets. The average sparsity-set size for DKL is denoted by $m$. SVI used $m$ inducing points, while SVI$_{32}$ and SVI$_{512}$ used 32 and 512 points, respectively. While SVI$_{32}$ and SVI$_{512}$ are included for reference, they exhibit substantially higher computational complexity and training time than the other approaches and are hence colored in grey.}
    \label{tbl:real_data}
    \vspace{2mm}
    \centering
    \begin{small}
    \begin{sc}
    \setlength\tabcolsep{4.5pt}
    \begin{tabular}{c|r|r|r|r|r|r|r|r|r|r|r|r|r|r|r|r|r|r}
    & \multicolumn{2}{c|}{3DRoad} & \multicolumn{2}{c|}{Precip} & \multicolumn{2}{c|}{Kin40k} & \multicolumn{2}{c|}{Protein} & \multicolumn{2}{c|}{Bike} & \multicolumn{2}{c|}{elevators} & \multicolumn{2}{c|}{KEGG} & \multicolumn{2}{c|}{KEGGU} & \multicolumn{2}{c}{Covtype}\\
    $n$, $d$ & \multicolumn{2}{c|}{65K, 3} & \multicolumn{2}{c|}{85K, 3} & \multicolumn{2}{c|}{40K, 8}& \multicolumn{2}{c|}{44K, 9} & \multicolumn{2}{c|}{17K, 17 }& \multicolumn{2}{c|}{17K, 18} & \multicolumn{2}{c|}{16K, 20} & \multicolumn{2}{c|}{18K, 26} & \multicolumn{2}{c}{100K, 53} \\
    $m$ &  \multicolumn{2}{c|}{2} & \multicolumn{2}{c|}{5} & \multicolumn{2}{c|}{7} & \multicolumn{2}{c|}{8} & \multicolumn{2}{c|}{12} & \multicolumn{2}{c|}{22} & \multicolumn{2}{c|}{19} & \multicolumn{2}{c|}{21} & \multicolumn{2}{c}{3}\\
    \hline
    SVI & .80& .28 & .91& .43 & .61& .01 & .81& 0.29 & .09& -1.85 & .39& -.43 & .08& -2.05 & .06& -2.30 & .50& NA\\
    \textcolor{gray}{SVI$_{32}$} & \textcolor{gray}{.59}& \textcolor{gray}{-.02} & \textcolor{gray}{.83}& \textcolor{gray}{.34} & \textcolor{gray}{.37}& \textcolor{gray}{-.47} & \textcolor{gray}{.75}& \textcolor{gray}{0.22} & \textcolor{gray}{.06} & \textcolor{gray}{-2.21} & \textcolor{gray}{.39} & \textcolor{gray}{-.45} & \textcolor{gray}{.07} & \textcolor{gray}{-2.16} & \textcolor{gray}{\textbf{.06}}& \textcolor{gray}{\textbf{-2.29}} & \textcolor{gray}{.50} & \textcolor{gray}{NA}\\
    \textcolor{gray}{SVI$_{512}$} & \textcolor{gray}{.38}& \textcolor{gray}{-.44} & \textcolor{gray}{.64} & \textcolor{gray}{.11} & \textcolor{gray}{\textbf{.17}}& \textcolor{gray}{\textbf{-1.2}} & \textcolor{gray}{.67} & \textcolor{gray}{0.10} & \textcolor{gray}{\textbf{.03}} & \textcolor{gray}{\textbf{-2.69}} & \textcolor{gray}{\textbf{.37}} & \textcolor{gray}{\textbf{-.49}} & \textcolor{gray}{\textbf{.07}} & \textcolor{gray}{\textbf{-2.22}} & \textcolor{gray}{.06} & \textcolor{gray}{-2.28} & \textcolor{gray}{.50} & \textcolor{gray}{NA} \\
    VNN & .28& 2.16 & .49& 4.35 & .56& 24.19 & .69& 5.59 & .49& 7.60 & .65& 1.26 & .13& 1.22 & .14& 3.85 & NA& NA\\
    DKL & \textbf{.27}& \textbf{-.83} & \textbf{.41}& \textbf{-.38} & .37& -.55 & \textbf{.56}& \textbf{-.19} & .11& -1.63 & .43& -.37 & .09& -1.97 & .11& -2.08 & \textbf{.28}& NA
    \end{tabular}
    \end{sc}
    \end{small}
\end{table*}

\begin{table*}[htb]
    \caption{Comparison of wall-clock time (in seconds) for the datasets and methods in Table~\ref{tbl:real_data}. $\sparsity \& \anc$ refers to computing the r-maximin ordering, sparsity pattern and ancestor sets.}
    \label{tbl:real_data_timing}
    \vspace{2mm}
    \centering
    \begin{small}
    \begin{sc}
    \setlength\tabcolsep{4.5pt}
    \begin{tabular}{c|r|r|r|r|r|r|r|r|r}
    & \multicolumn{1}{c|}{3DRoad} & \multicolumn{1}{c|}{Precip} & \multicolumn{1}{c|}{Kin40k} & \multicolumn{1}{c|}{Protein} & \multicolumn{1}{c|}{Bike} & \multicolumn{1}{c|}{elevators} & \multicolumn{1}{c|}{KEGG} & \multicolumn{1}{c|}{KEGGU} & \multicolumn{1}{c}{Covtype}\\
    \hline
    SVI & 3{,}283   & 3{,}965   & 3{,}305   & 3{,}589    & 1{,}487 & 1{,}386      & 1{,}329 & 1{,}594  & 5{,}945    \\
    \textcolor{gray}{SVI$_{32}$} & \textcolor{gray}{8{,}879}   & \textcolor{gray}{9{,}207}   & \textcolor{gray}{5{,}460}   & \textcolor{gray}{5{,}941}    & \textcolor{gray}{3{,}082} & \textcolor{gray}{2{,}952}      & \textcolor{gray}{3{,}159} & \textcolor{gray}{3{,}387}  & \textcolor{gray}{7{,}722}    \\
    \textcolor{gray}{SVI$_{512}$} & \textcolor{gray}{25{,}409} & \textcolor{gray}{26{,}223} & \textcolor{gray}{21{,}710} & \textcolor{gray}{22{,}179} & \textcolor{gray}{12{,}232} & \textcolor{gray}{9{,}518} & \textcolor{gray}{9{,}988} & \textcolor{gray}{10{,}642} & \textcolor{gray}{44{,}839} \\
    VNN & 2{,}788   & 3{,}332   & 1{,}696   & 2{,}081    & 568  & 454      & 487  & 595   & NA      \\
    DKL & 1{,}591   & 3{,}948   & 1{,}859   & 2{,}736    & 3{,}129 & 807      & 1{,}285 & 1{,}536  & 4{,}932   \\
    $\sparsity \& \anc$ & 90   & 268   & 2{,}866   & 440    & 170 & 577      & 171 & 207  & 1{,}277  
    \end{tabular}
    \end{sc}
    \end{small}
\end{table*}

Figure~\ref{fig:outsample_f} compares root mean squared error (RMSE) and negative log-likelihood (NLL) at test inputs. For the Gaussian and Student-$t$ likelihoods, DKLGP produced the most accurate predictions, while for the Bernoulli-logit likelihood, SVIGP and DKLGP appeared to be similarly accurate in terms of RMSE. DKLGP outperformed the competing methods in terms of NLL. While DKLGP, DKL-G, and SVIGP improved with increasing $\rho$ as expected, the mean-field approximations (VNNGP and DKL-D) generally did not. We performed the same comparison for the squared exponential and rational quadratic kernels in Figures~\ref{fig:outsample_f_RBF} and \ref{fig:outsample_f_RQ} in Appendix~\ref{app:additional}, which resulted in the same rankings as for the Mat\'ern kernel, except that DKLGP was marginally outperformed by SVIGP in terms of RMSE for the Bernoulli-logit likelihood at $\rho = 2.0$.

We also computed RMSE and NLL scores at training inputs for the methods we considered in Figure~\ref{fig:outsample_f}, as presented in Figure~\ref{fig:insample_f} in Appendix \ref{app:additional}. Consistent with the results from Figure~\ref{fig:outsample_f}, DKLGP generally performed best, which is consistent with the results from Figure~\ref{fig:outsample_f}. VNNGP performed similarly to DKLGP for the Gaussian and Student-$t$ likelihoods, but underestimated the variance at test inputs and so led to poor NLL scores. Note that variational methods are generally known to underestimate the posterior variance \citep{blei2017variational}.

\subsection{Results on UCI Data}

To provide a more comprehensive comparison of SVIGP, VNNGP and DKLGP, we considered datasets from the UCI data repository widely used for benchmarking purposes. For the UCI datasets we considered in this section, covariates were first standardized to $[0, 1]$ and removed from analysis if the standard deviation after standardization was smaller than $0.01$. Furthermore, inputs were filtered to ensure that the minimum distance between inputs was greater than $0.001$ to prevent numerical singularity. Approximately $20\%$ of each dataset was used for testing. We chose different $\rho$ for different datasets and computed the corresponding $m$. Since Section~\ref{subsec:sim_study} demonstrated the advantage of DKLGP over its variants DKL-G and DKL-D, we excluded the two variants here for ease of presentation. We included SVIGP with $m = 32$ and $m = 512$ inducing points as benchmarks for easier comparison with relevant works in the literature. 

Table~\ref{tbl:real_data} summarizes the performance of the three methods across nine UCI datasets. DKLGP had better scores than VNNGP for all datasets except for COVTYPE, for which VNNGP ran out of memory on a 64GB node despite having reduced the data size to a subset of size $100$K. Compared to the SVIGP with similar computation cost, DKLGP provided substantially better performance for the binary response data COVTYPE and for low-dimensional ($d<10$) settings, and roughly similar performance for most high-dimensional datasets except the KEGGU data, for which SVIGP produced much lower RMSE than DKLGP. However, this does not appear to be due to DKLGP providing a less accurate approximation to the exact GP, but rather it appears to be due to the exact GP (with its simple ARD kernel) being severely misspecified for KEGGU. To explore this further, we fitted the exact GP (DenseGP) to KEGGU. The DenseGP's RMSE was 0.14 (same as for DKLGP), and the root average squared distance between the DenseGP predictions and the DKLGP and SVIGP predictions was $0.05$ and $0.13$, respectively, which implies that the DKLGP predictions were a much better approximation of the exact-GP predictions than the SVIGP predictions. VNNGP provided better point predictions than SVIGP for the low-dimensional datasets, which is consistent with the results in \citet{Wu2022}; however, VNNGP's NLL was high due to its underestimation of posterior variance. 

Table~\ref{tbl:real_data_timing} summarizes the wall-clock times on an Intel Xeon E5-2680 v4 CPU with 14 cores and 28 threads for the methods under comparison, where the computation of sparsity and ancestor sets is only applicable to DKLGP or DKL. The DKL computation times were closer to those of SVI than to those of SVI$_{32}$, indicating that DKLGP and SVIGP should be compared at the same $m$ on the basis of comparable computation times. While increasing $m$ significantly improved SVIGP's performance on low-dimensional datasets, even $m=512$ inducing points made the training of SVIGP challenging on the workstation we used for comparison. The performance of DKLGP can also be improved by using a larger $\rho$; for example, DKLGP's RMSE for the KIN40K data was reduced to $0.27$ for $m = 21$.

\section{Conclusion \label{sec:conclusions}}

We have introduced a variational approach using a variational family and approximate prior based on SIC restrictions. The (r-)maximin ordering, nearest-neighbor sparsity pattern, and a computational trick based on reduced ancestor sets together result in efficient and accurate inference and prediction for LGPs. 
While the time complexity is cubic in the number of neighbors, quadratic complexity for the prior approximation can be achieved by grouping observations and re-using Cholesky factors \citep{Schafer2020}; we will investigate an extension of this idea to computing the ELBO in our variational setting.
Although we here assume that the input domain is Euclidean, our method can be applied more generally; using a correlation-based distance instead of Euclidean distance \citep{Kang2021}, one can use our method to perform LGP inference for large data on complex domains \citep[cf.][]{tibo2022inducing}. We will also explore extensions to deep GPs \citep[cf.][]{Sauer2022}. 
An implementation of our method, along with code to reproduce all results, is publicly available at \url{https://github.com/katzfuss-group/DKL-GP}.

Our approach is applicable to irregularly spaced observations and in principle to any desired covariance structure. Our method provides state-of-the-art performance when fine-scale structure in the function of interest can be discerned from the data; in contrast, if the data are highly noisy or sparse or the covariance model is severely misspecified, inducing-point methods such as SVIGP that produce smooth predictions and wide uncertainty intervals may be competitive with our approach.

\section*{Acknowledgments}

Jian Cao was partially supported by the Texas A\&M Institute of Data Science (TAMIDS) Postdoctoral Project program, Jian Cao and Matthias Katzfuss by National Science Foundation (NSF) Grant DMS--1654083, and Felix Jimenez and Matthias Katzfuss by NSF Grant DMS--1953005. We would like to thank Luhuan Wu for helpful comments and discussions.

\bibliography{refs_compact}
\bibliographystyle{icml2023}

\newpage
\appendix
\onecolumn
\section{Additional numerical results\label{app:additional}}

This section contains additional figures not shown in the main paper. Complementing Figure~\ref{fig:sizes_n_rho}, Figure~\ref{fig:sizes_rho} shows that reduced ancestor sets $\tilde{\anc}_i$ are much smaller than full ancestor sets $\anc_i$ across a range of $\rho$ values.

\begin{figure*}[ht]
\centering
\hfill
\begin{subfigure}{.33\textwidth}
\centering
\includegraphics[width =.99\linewidth]{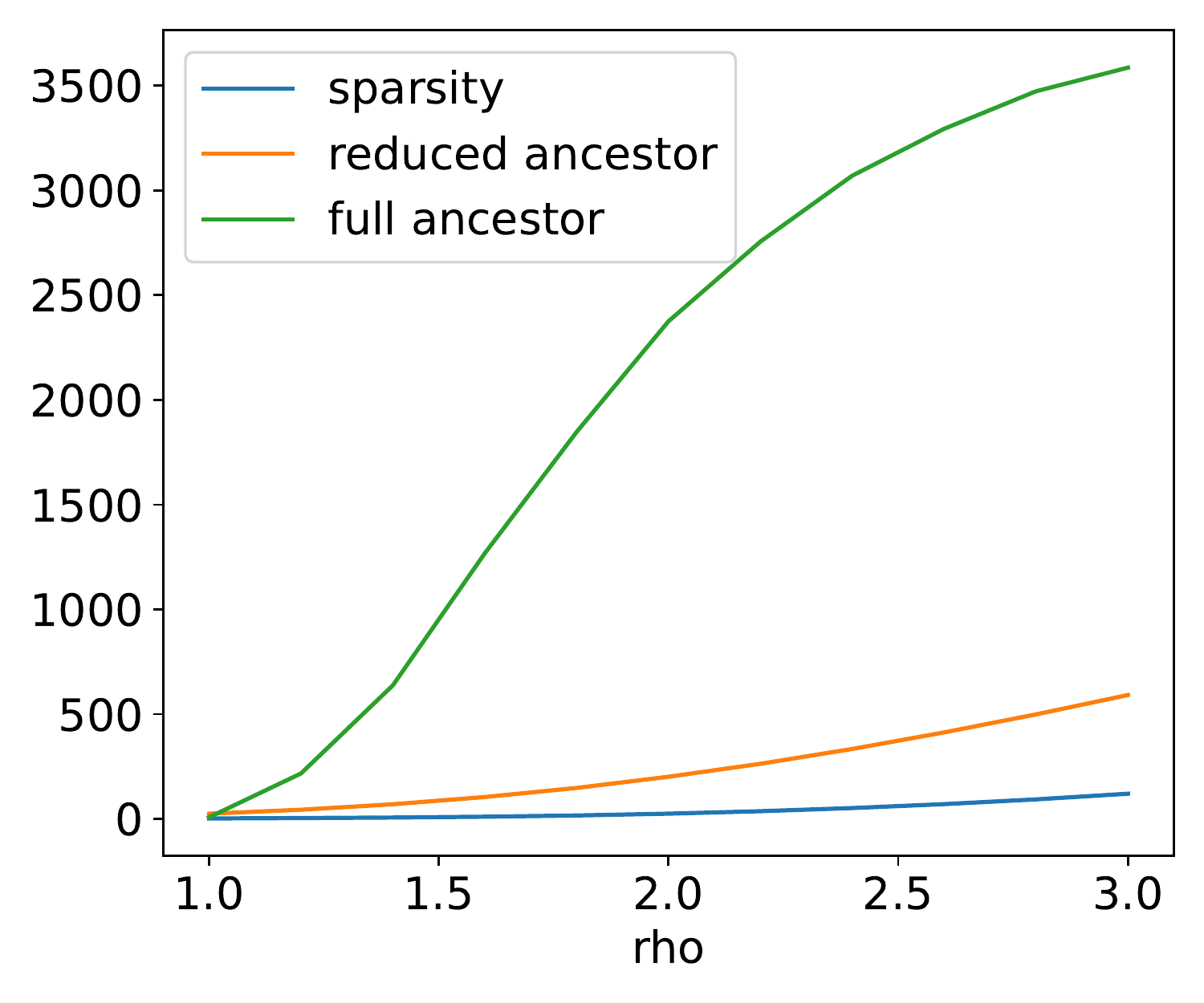}
\end{subfigure}
\hfill ~
\caption{Average sizes of the sparsity sets $\sparsity_i$, reduced ancestor sets $\tilde\anc_i$, and full ancestor sets $\anc_i$ as a function of $\rho$ with $n=8{,}000$. The inputs are sampled uniformly on $[0,1]^5$.}
\label{fig:sizes_rho}
\end{figure*}

Complementing Figure~\ref{fig:effectofreduced}, Figures~\ref{fig:effectofreduced_RBF} and \ref{fig:effectofreduced_RQ} show that the approximation error in computing the ELBO caused by using reduced ancestor sets is negligible even for the squared-exponential and rational-quadratic kernels, respectively.
\begin{figure*}[h!]
\centering
\begin{subfigure}{.33\textwidth}
\centering
\includegraphics[width =.99\linewidth]{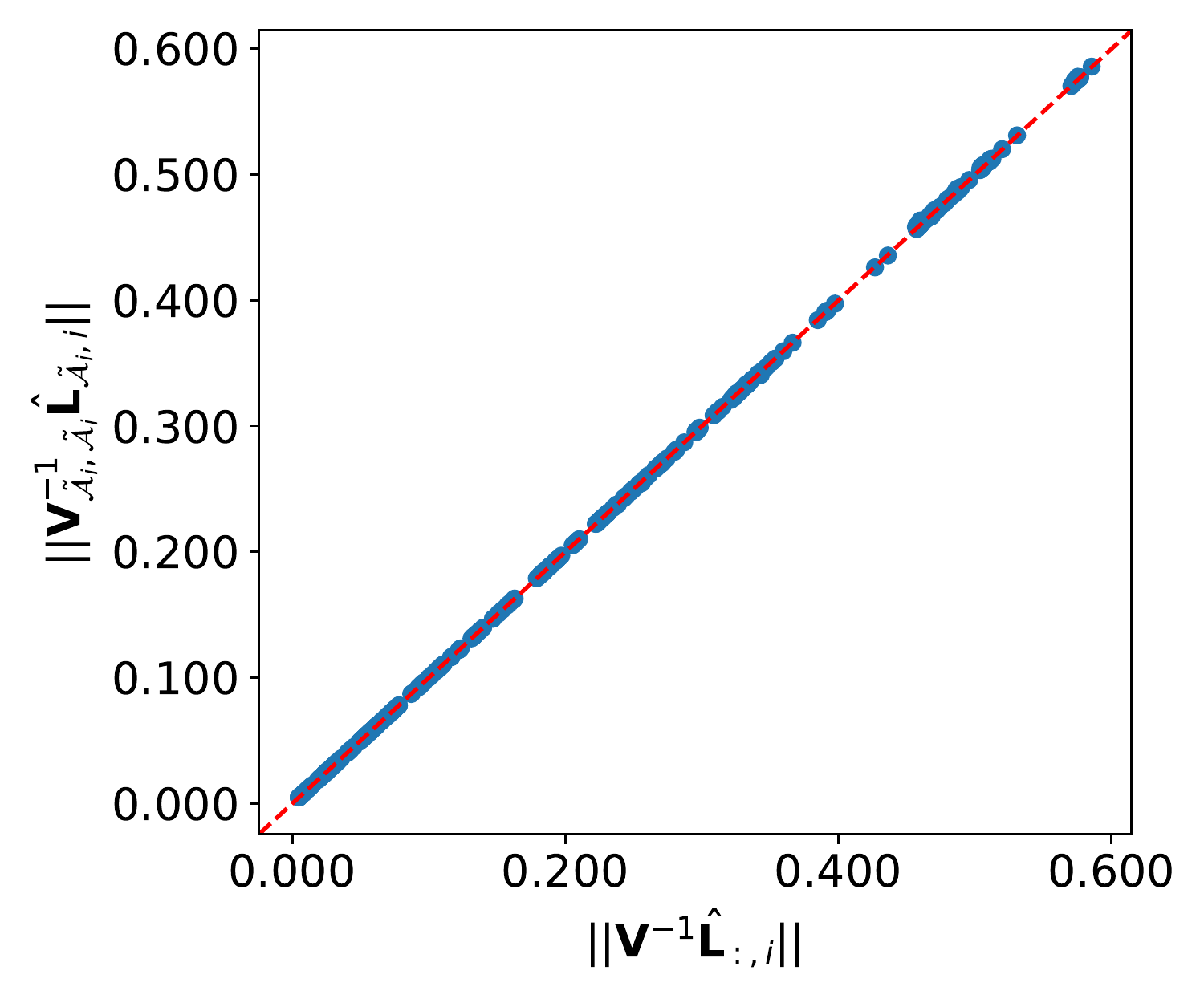}
\caption{$\|\bV_{\tilde\anc_i,\tilde\anc_i}^{-1}\hat\bL_{\tilde\anc_i,i}\|$ vs $\|\bV^{-1}\hat\bL_{:,i}\|$}
\end{subfigure}
\begin{subfigure}{.33\textwidth}
\centering
\includegraphics[width =.99\linewidth]{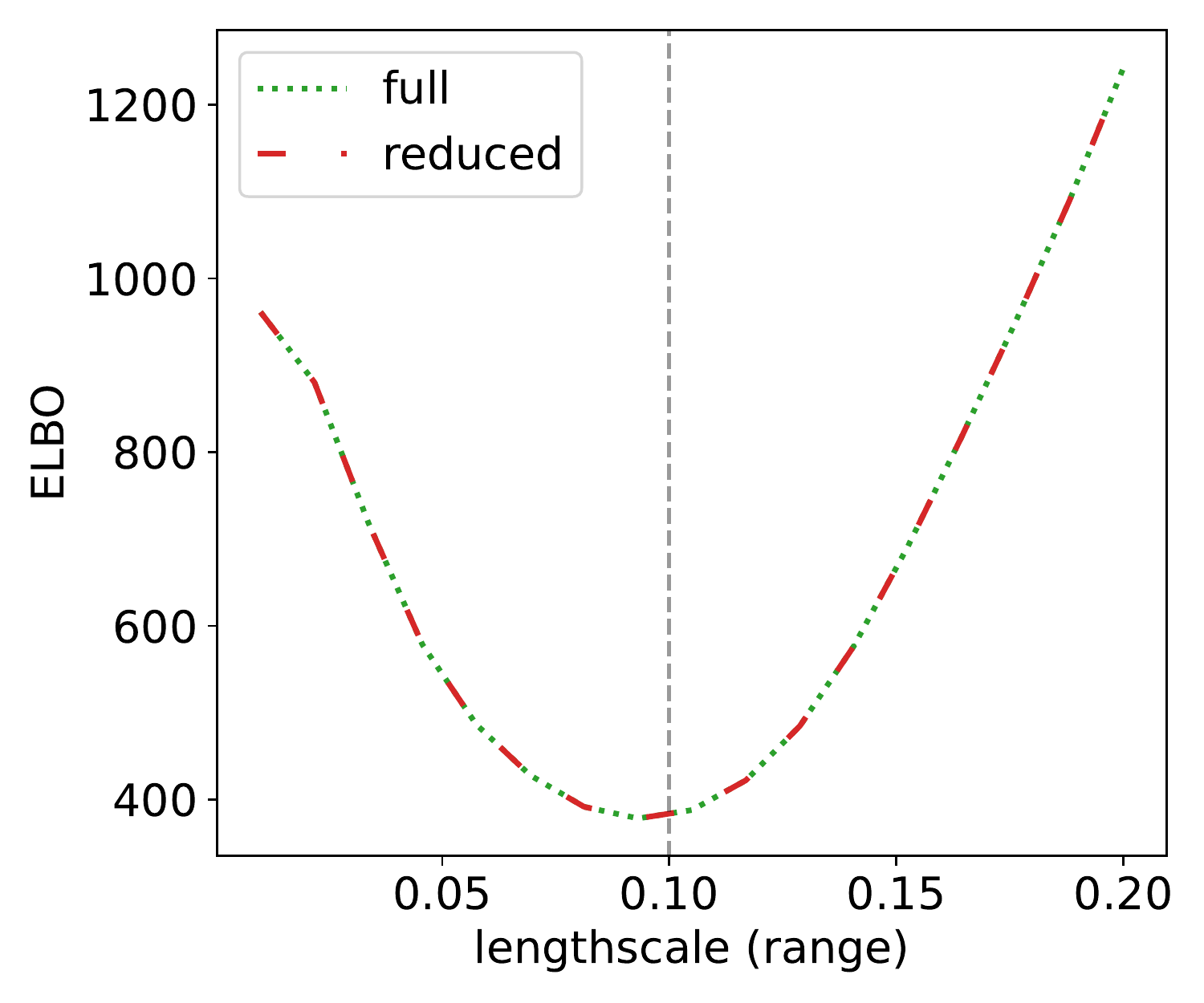}
\caption{ELBO using reduced ancestors}
\end{subfigure}
\caption{The squared-exponential-kernel versions of Figures \ref{fig:reducedsolve} (left) and \ref{fig:ELBO_err} (right): The left figure compares $\|\bV_{\tilde\anc_i,\tilde\anc_i}^{-1}\hat\bL_{\tilde\anc_i,i}\|$ with reduced ancestor sets versus $\|\bV^{-1}\hat\bL_{:,i}\|$  for $i = 1, \ldots, n$, where $n = 500$ and $d = 2$. The right figure compares ELBO curves based on full \eqref{eq:ELBOhat} and reduced \eqref{eq:ELBOreduced} ancestor sets, as functions of the range parameter with true value $0.1$, for $n=500$ and $d=2$. In all plots, we set $\rho=2$ and the $n$ inputs are sampled uniformly on $[0,1]^d$.}
\label{fig:effectofreduced_RBF}
\end{figure*}

\begin{figure*}[h!]
\centering
\begin{subfigure}{.33\textwidth}
\centering
\includegraphics[width =.99\linewidth]{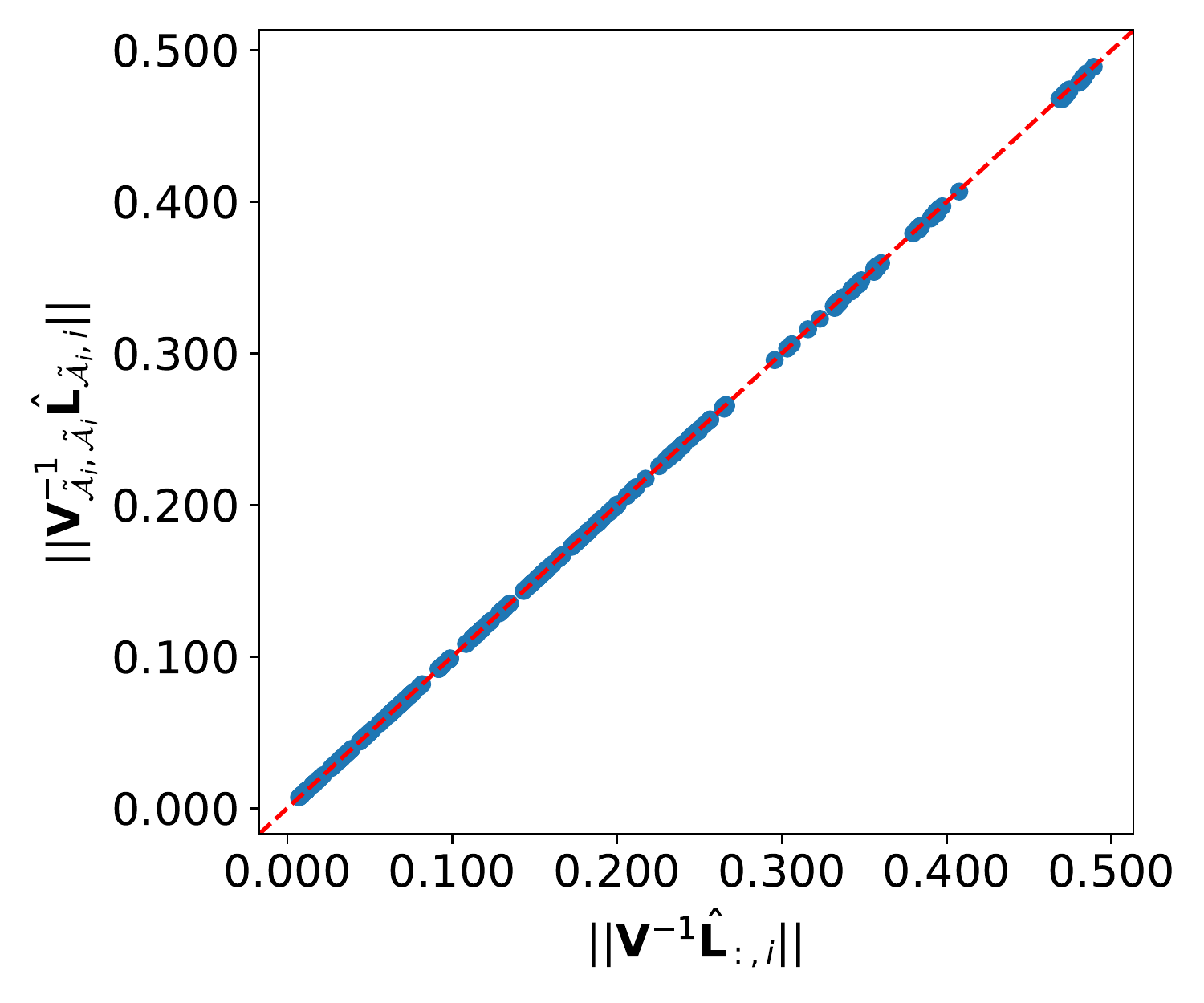}
\caption{$\|\bV_{\tilde\anc_i,\tilde\anc_i}^{-1}\hat\bL_{\tilde\anc_i,i}\|$ vs $\|\bV^{-1}\hat\bL_{:,i}\|$}
\end{subfigure}
\begin{subfigure}{.33\textwidth}
\centering
\includegraphics[width =.99\linewidth]{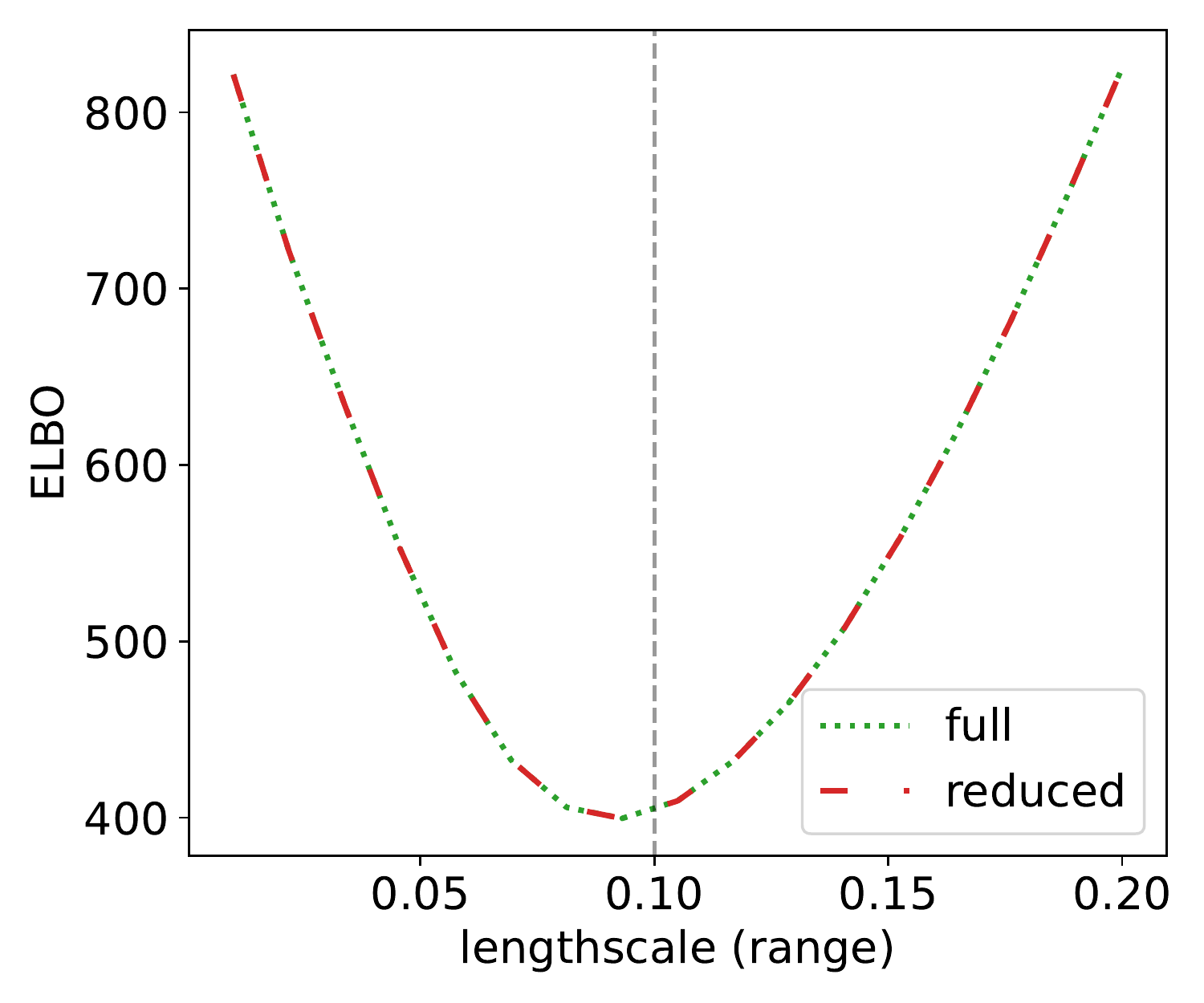}
\caption{ELBO using reduced ancestors}
\end{subfigure}
\caption{The rational-quadratic-kernel versions of Figures \ref{fig:reducedsolve} (left) and \ref{fig:ELBO_err} (right): The left figure compares $\|\bV_{\tilde\anc_i,\tilde\anc_i}^{-1}\hat\bL_{\tilde\anc_i,i}\|$ with reduced ancestor sets versus $\|\bV^{-1}\hat\bL_{:,i}\|$  for $i = 1, \ldots, n$, where $n = 500$ and $d = 2$. The right figure compares ELBO curves based on full \eqref{eq:ELBOhat} and reduced \eqref{eq:ELBOreduced} ancestor sets, as functions of the range parameter with true value $0.1$, for $n=500$ and $d=2$. In all plots, we set $\rho=2$ and the $n$ inputs are sampled uniformly on $[0,1]^d$.}
\label{fig:effectofreduced_RQ}
\end{figure*}

Figure~\ref{fig:post_mean_opt} suggests that the initialization of $\bfnu$ using Vecchia-Laplace approximation and incomplete Cholesky (IC0) approximation provides reasonable starting values for $\bfnu$, which can be further refined by optimizing the ELBO.

\begin{figure}[h!]
\centering
\includegraphics[width =.4\linewidth]{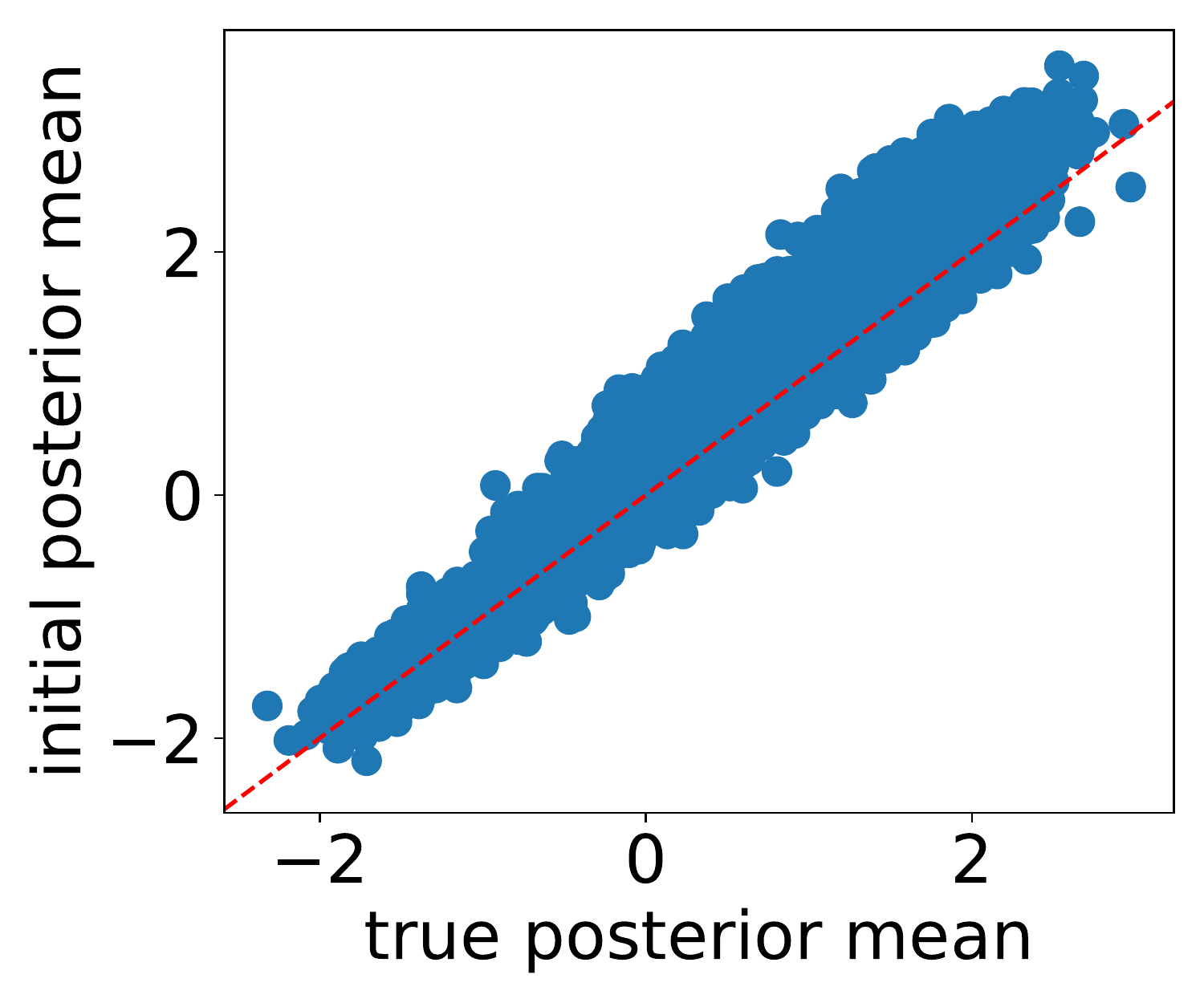}
\includegraphics[width =.4\linewidth]{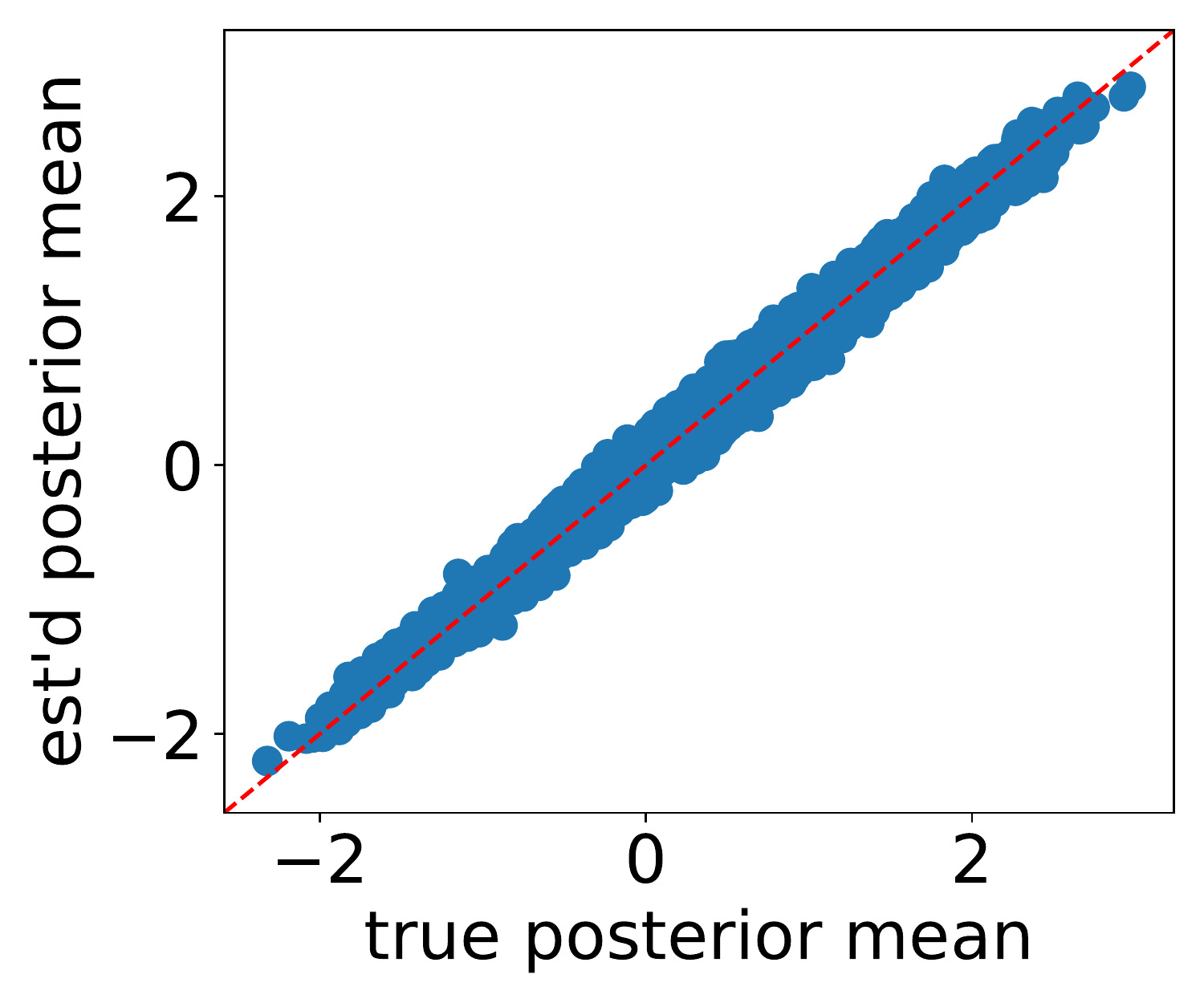}
\caption{Comparison of estimated and true posterior means: The left panel uses the estimated posterior means at the initialization step (i.e., the IC0 solution) and right panel used the estimated posterior means after ELBO optimization for simulated Gaussian data. The simulation setting is the same as in Figure~\ref{fig:insample_f}.}
\label{fig:post_mean_opt}
\end{figure}

Figure~\ref{fig:insample_f} shows a comparison of RMSE and NLL scores for the posterior marginals of the entries of $\mathbf{f}$ at training inputs. In contrast to Figure~\ref{fig:outsample_f}, VNNGP performed similarly to DKLGP and outperformed SVIGP for Gaussian and Student-t likelihoods. Furthermore, the Vecchia-Laplace approximation with IC0 (used as the initialization for DKLGP) was usually the third best model, indicating an advantage of using the SIC restriction for $\bL$ and $\bV$.
\begin{figure*}[h!]
\centering
	\begin{subfigure}{.99\textwidth}
	\centering
 	\includegraphics[width =.99\linewidth]{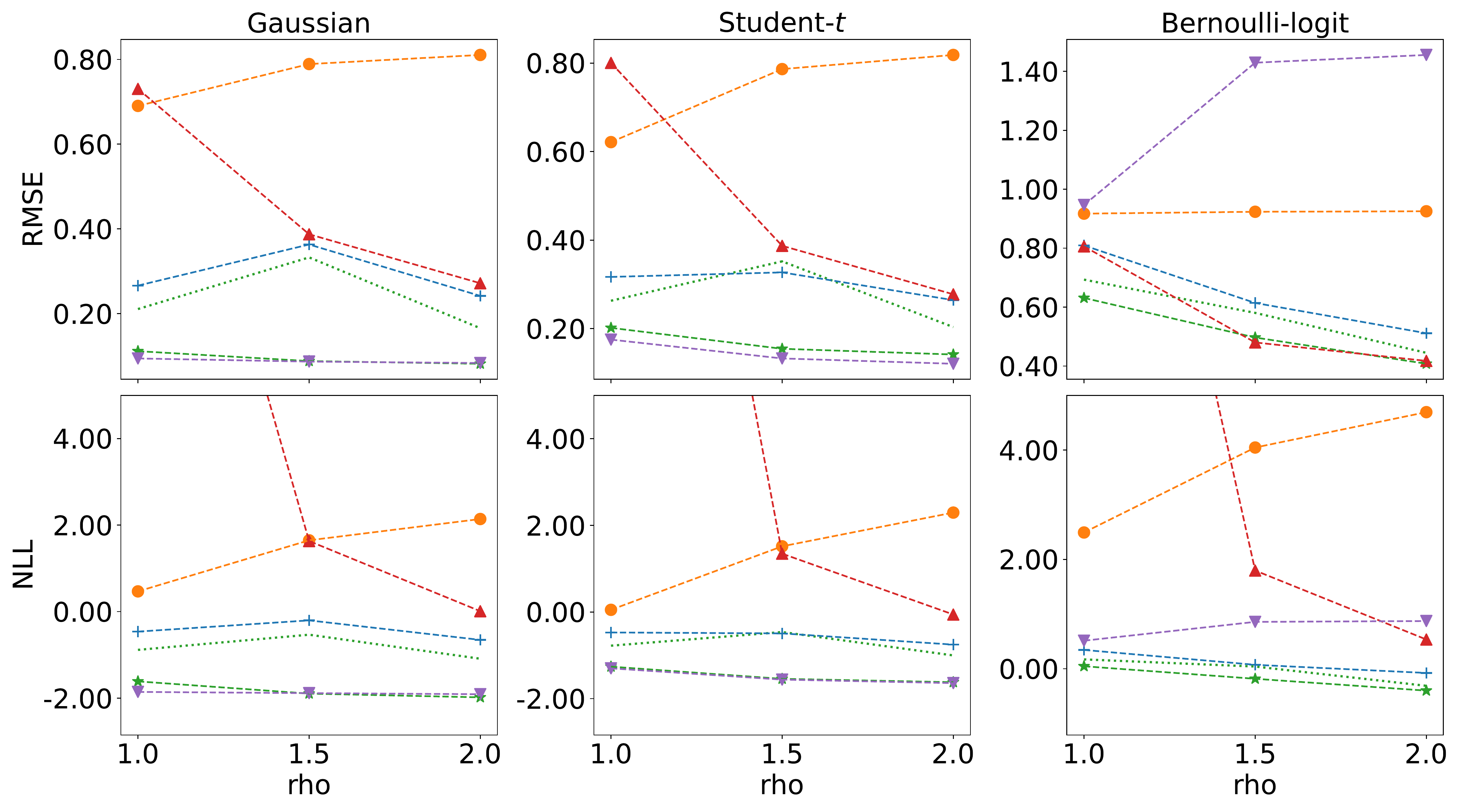}
	\end{subfigure}%

   \begin{subfigure}{0.65\textwidth}
	\centering
 	\includegraphics[width =.99\linewidth]{plots/legend.pdf}
	\end{subfigure}%
 
  \caption{RMSE (top) and NLL (bottom) for predicting the latent field at training inputs, as a function of the complexity parameter $\rho$, with Gaussian (left), Student-$t$ (center) and Bernoulli-logit (right) likelihoods, under the same experimental setting in Figure~\ref{fig:outsample_f}. The green dotted lines are the scores of the model obtained only by initialization using Vecchia-Laplace approximation and IC0.}
\label{fig:insample_f}
\end{figure*}

Figure \ref{fig:1D_infer_part2} shows one-dimensional toy examples for Student-$t$ and Bernoulli-logit likelihoods. Note that the DenseGP is available only for the Gaussian likelihood.

\begin{figure*}[h!]
\centering
\hfill
	\begin{subfigure}{.45\textwidth}
	\centering
 	\includegraphics[trim={0 0cm 0 0}, clip, width =.99\linewidth]{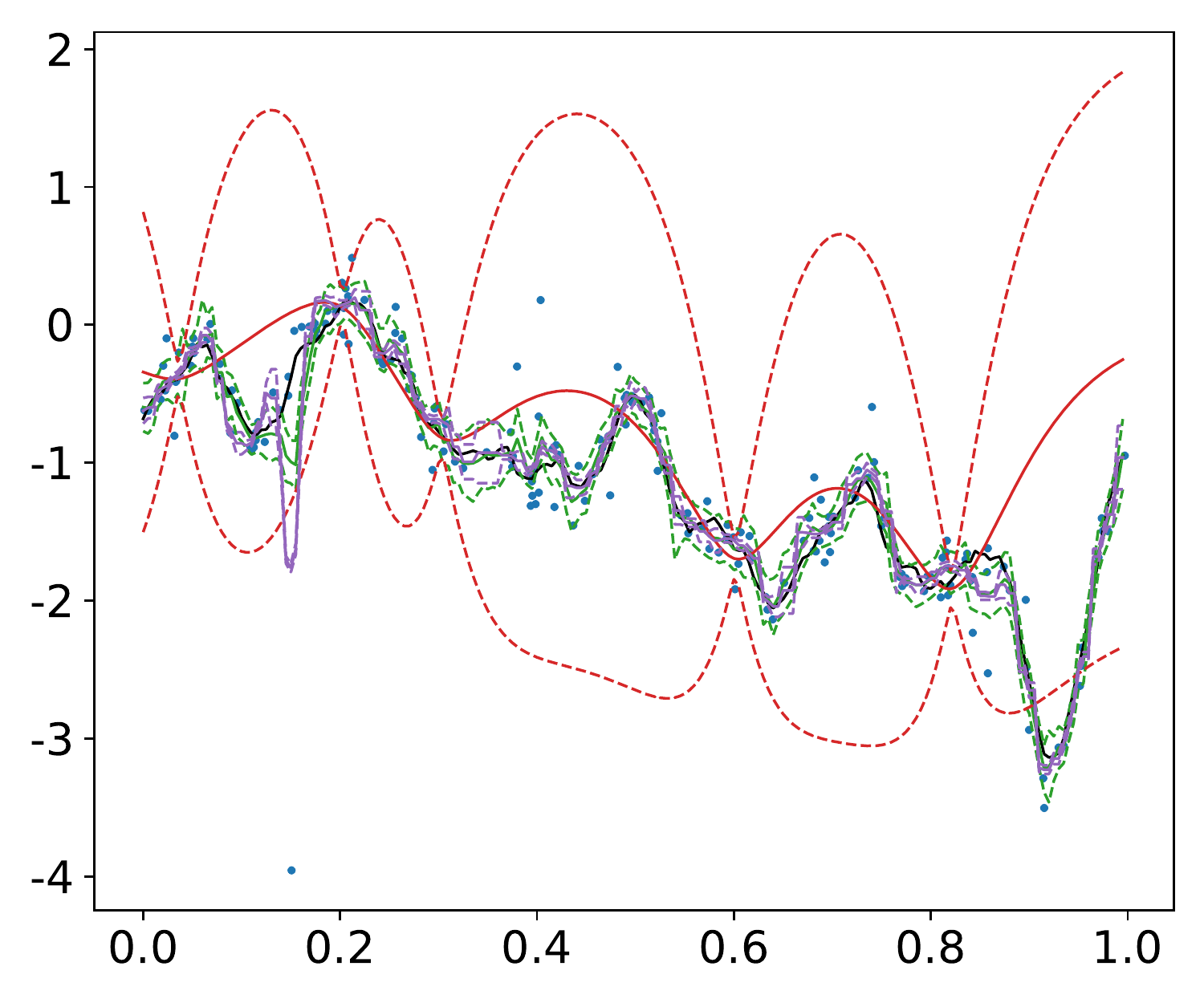}
	\end{subfigure}%
\hfill
	\begin{subfigure}{.45\textwidth}
	\centering
 	\includegraphics[trim={0 0cm 0 0}, clip, width =.99\linewidth]{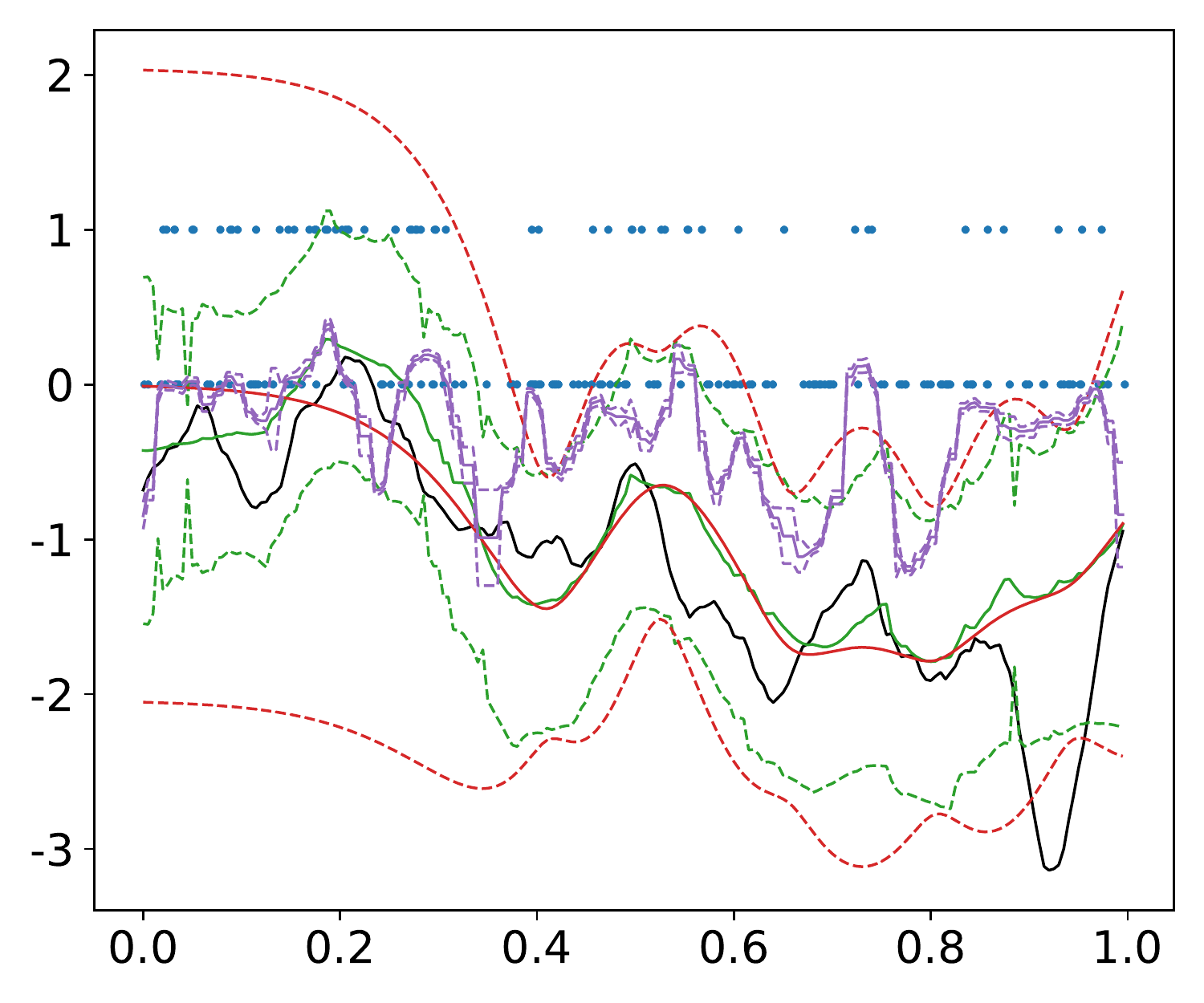}
	\end{subfigure}%
\hfill
   \begin{subfigure}{0.5\textwidth}
	\centering
 	\includegraphics[trim={0 0 3.4in 0},clip, width =.99\linewidth]{plots/1D_legend_combined.pdf}
	\end{subfigure}%
  \caption{Comparison of three variational approximations to the predictive GP posteriors for the Student-$t$ (left) and Bernoulli-logit likelihoods (right). We show the means (solid lines) and $95\%$ pointwise intervals of the posterior predictive distribution $\mathbf{f}^* | \by$ at 200 regularly spaced test inputs. Note that the noise variance $\sigma_{\epsilon} = 0.3$ and range (or length-scale) $\lambda=0.1$ are used. The exact-GP result (DenseGP) is available only for the Gaussian likelihood.}
\label{fig:1D_infer_part2}
\end{figure*}

Similar to Figure~\ref{fig:outsample_f}, Figures~\ref{fig:outsample_f_RBF} and \ref{fig:outsample_f_RQ} provide RMSE and NLL scores at test inputs but for the squared-exponential and rational-quadratic kernels, respectively.
\begin{figure*}[h!]
\centering
	\begin{subfigure}{.99\textwidth}
	\centering
 	\includegraphics[width =.99\linewidth]{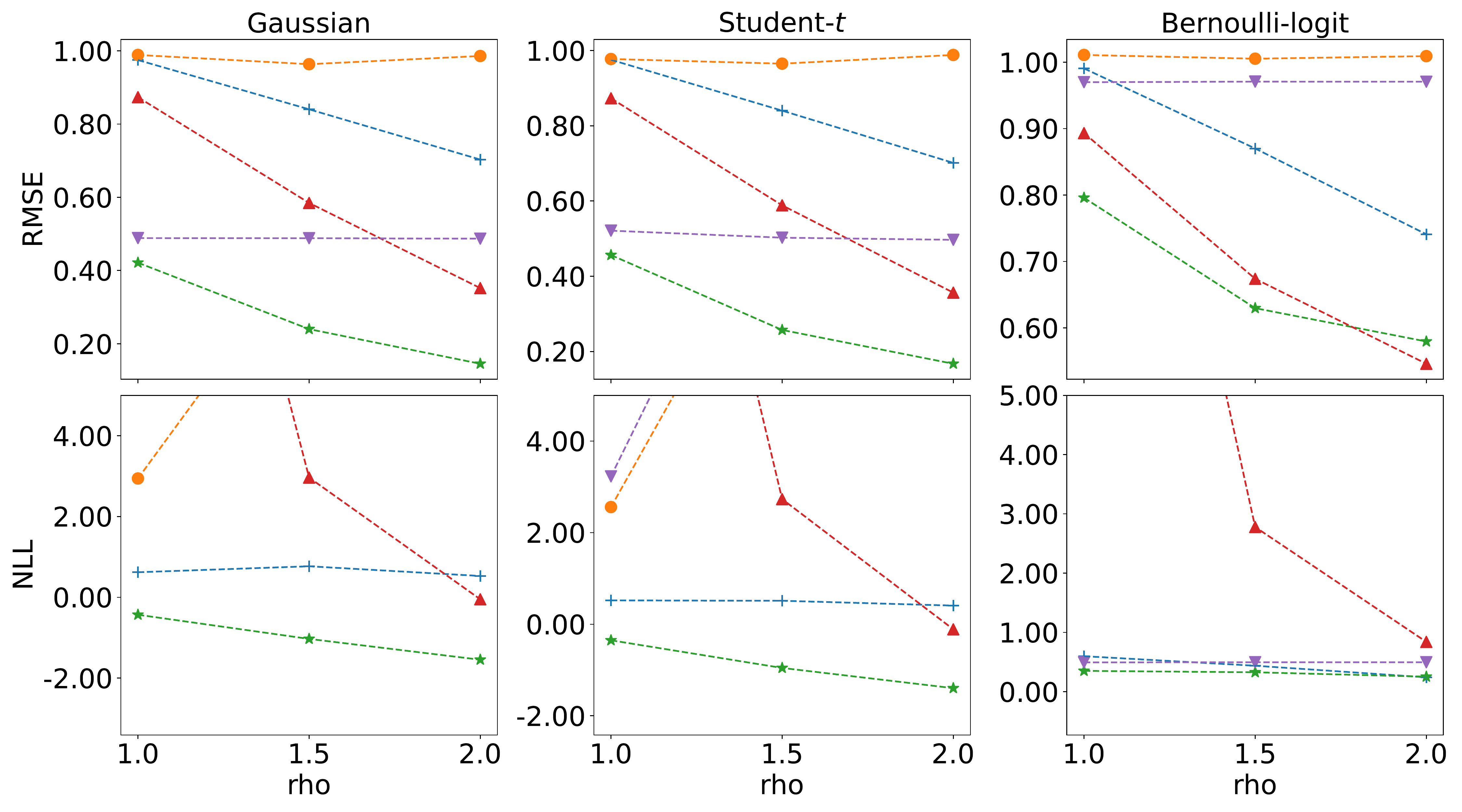}
	\end{subfigure}%

   \begin{subfigure}{0.6\textwidth}
	\centering
 	\includegraphics[width =.99\linewidth]{plots/legend.pdf}
	\end{subfigure}%
	
  \caption{RMSE (top) and NLL (bottom) for predicting the latent field at test inputs for simulated data with the squared exponential kernel in a five-dimensional input domain, as a function of the complexity parameter $\rho$, with Gaussian (left), Student-$t$ (center) and Bernoulli-logit (right) likelihoods. In the bottom panels, some lines are truncated for clearer comparison.}
  \label{fig:outsample_f_RBF}
\end{figure*}

\begin{figure*}[h!]
\centering
	\begin{subfigure}{.99\textwidth}
	\centering
 	\includegraphics[width =.99\linewidth]{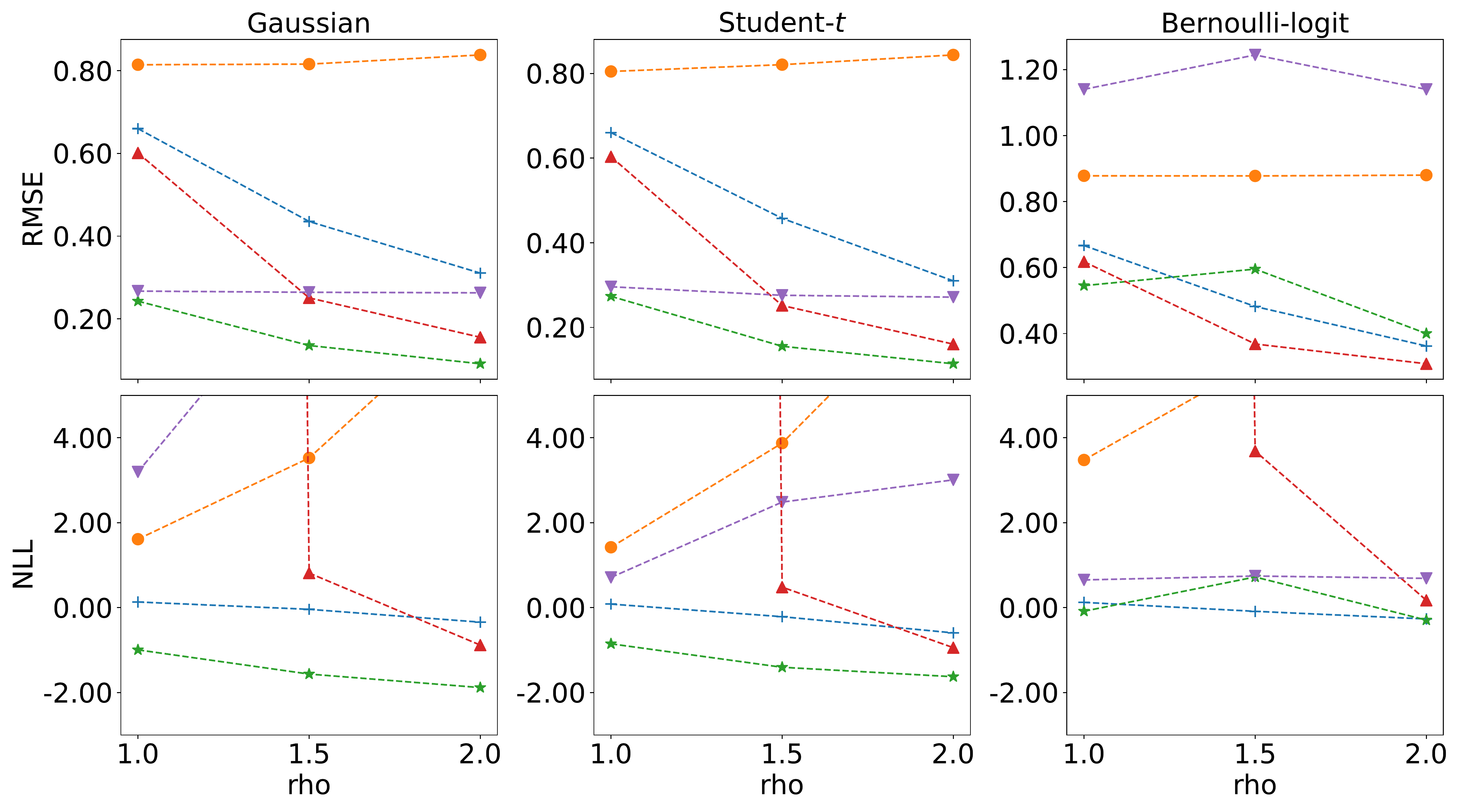}
	\end{subfigure}%

   \begin{subfigure}{0.6\textwidth}
	\centering
 	\includegraphics[width =.99\linewidth]{plots/legend.pdf}
	\end{subfigure}%
	
  \caption{RMSE (top) and NLL (bottom) for predicting the latent field at test inputs for simulated data with the rational quadratic kernel in a five-dimensional input domain, as a function of the complexity parameter $\rho$, with Gaussian (left), Student-$t$ (center) and Bernoulli-logit (right) likelihoods}
  \label{fig:outsample_f_RQ}
\end{figure*}

\section{Graph representation of sparsity patterns and ancestor sets\label{app:graph}}

We here illustrate the sparsity and ancestor sets, using their graph representations. As pointed out by \citet{Katzfuss2017a}, the sparsity patterns can be represented by directed acyclic graphs (DAGs), which also allows straightforward visualization of ancestor sets. Figure~\ref{fig:dag} presents sparsity and ancestor sets for three selected points ($i = 12, 4, 1$) of 16 grid points in the unit square. For example, $\bx_1 = \left(\frac{1}{3} , 1 \right)$ and $\bx_{16} = \left(\frac{2}{3} , \frac{2}{3} \right)$. One can easily see that $\ell_{16} = \infty$, $\ell_{15} = \frac{2 \sqrt{2}}{3}$, $\ell_{14} = \ell_{13} = \sqrt{\left(\frac{1}{3}\right)^2 + \left(\frac{2}{3}\right)^2}$, $\ell_{12} = \ell_{11} = \frac{\sqrt{2}}{3}$ and $\ell_{10} = \cdots = \ell_{1} = \frac{1}{3}$. The edges of the graphs corresponding to the ancestor sets $\anc_{12}$, $\anc_{4}$ and $\anc_{1}$ are denoted by the black curved arrows. Specifically, the sparsity set $\sparsity_1 = \{ 2, 7, 13 \}$, the reduced ancestor set $\tilde\anc_1 = \sparsity_1 \cup \{ 9, 11, 12 \}$ and the (full) ancestor set $\anc_1 = \tilde\anc_1 \cup \{ 15, 16 \}$. Note that $\anc_1$ contains $\tilde\anc_1$, which is a desirable property for leveraging the screening effect in GPs \citep{Stein2011,Bao2020}. This is not always the case for small-scale problems and it depends on distribution of the points, as shown in Figure~\ref{fig:dag2}. Specifically, $\anc_4 = \{ 10, 11, 14, 15, 16 \}$, but $\tilde\anc_4 \setminus \anc_4 = \{ 13 \} \neq \emptyset$. But our numerical studies suggest that $\tilde\anc_i \setminus \anc_i$ are typically empty or very small for large-scale problems, for which computational issues are severe and hence our method is most likely to be used. For relatively large $i=12$, $\sparsity_{12} = \tilde\anc_{12} = \anc_{12} = \{ 16 \}$. As illustrated here, all the reduced ancestor sets include $\bx_{16}$, since $\ell_{16} = \infty$. Otherwise, unlike $\tilde\anc_{4}$ and $\tilde\anc_{1}$, $\tilde\anc_{12}$ does not include $\bx_{15}$ since $\text{dist}(\bx_{15} , \bx_{12}) = \sqrt{2}$ is larger than $\rho \ell_{15} \simeq 1.226$.

\begin{figure}[htbp]
\centering
	\begin{subfigure}{.33\textwidth}
	\centering
 	\includegraphics[width =.99\linewidth]{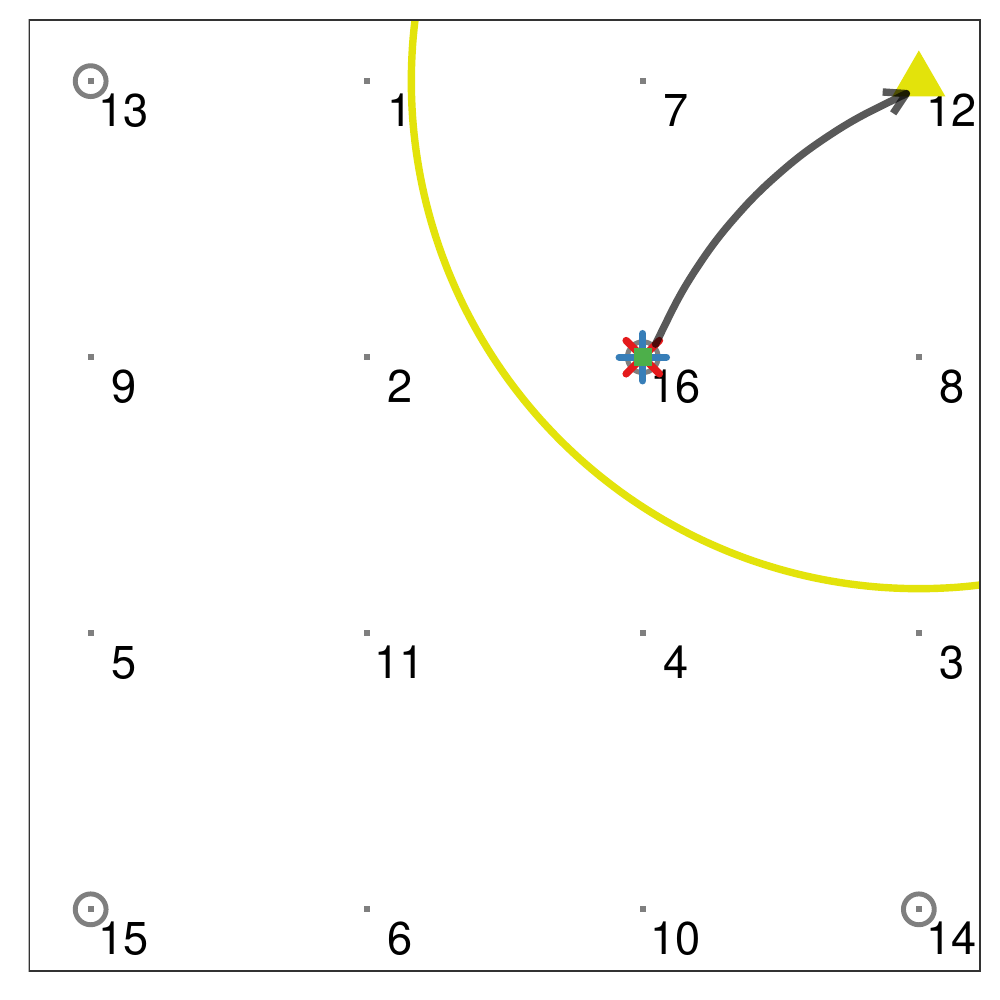}
	\caption{$i=12$}
	\label{fig:dag1}
	\end{subfigure}%
\hfill
	\begin{subfigure}{.33\textwidth}
	\centering
 	\includegraphics[width =.99\linewidth]{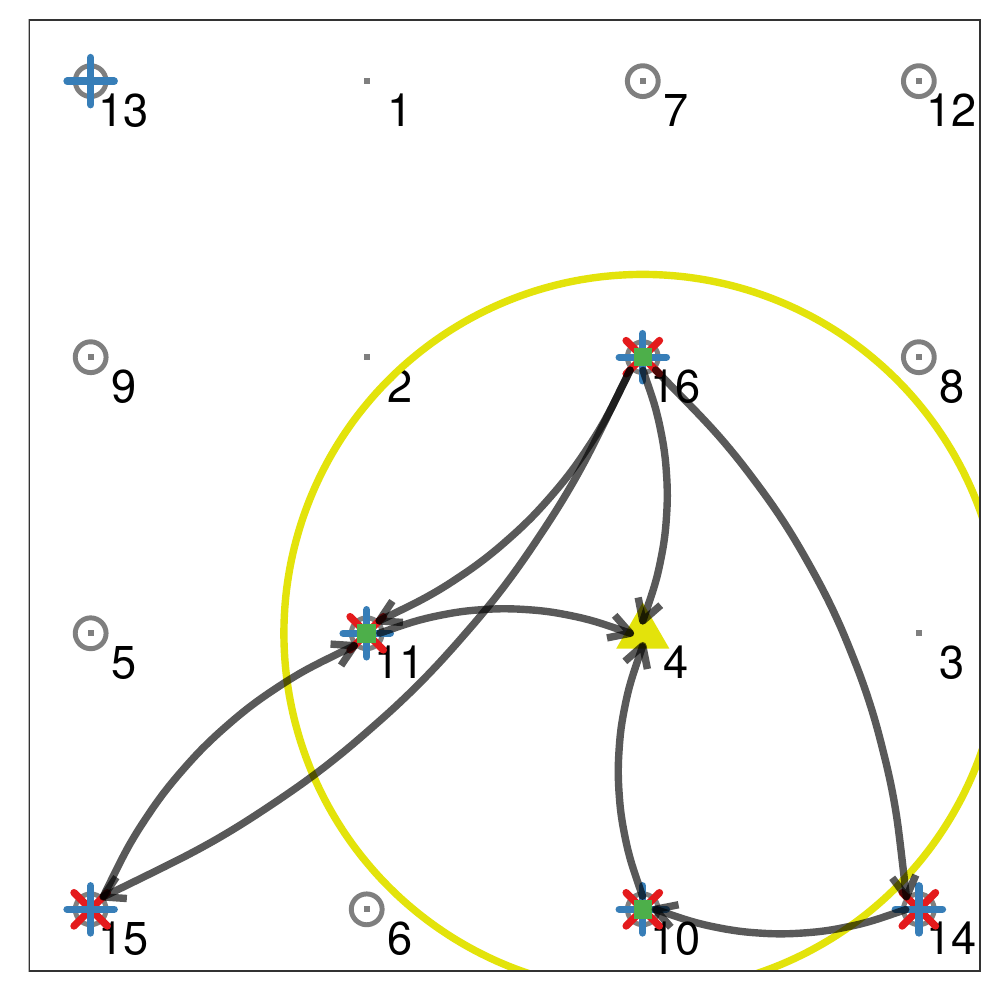}
	\caption{$i=4$}
	\label{fig:dag2}
	\end{subfigure}%
\hfill
	\begin{subfigure}{.33\textwidth}
	\centering
 	\includegraphics[width =.99\linewidth]{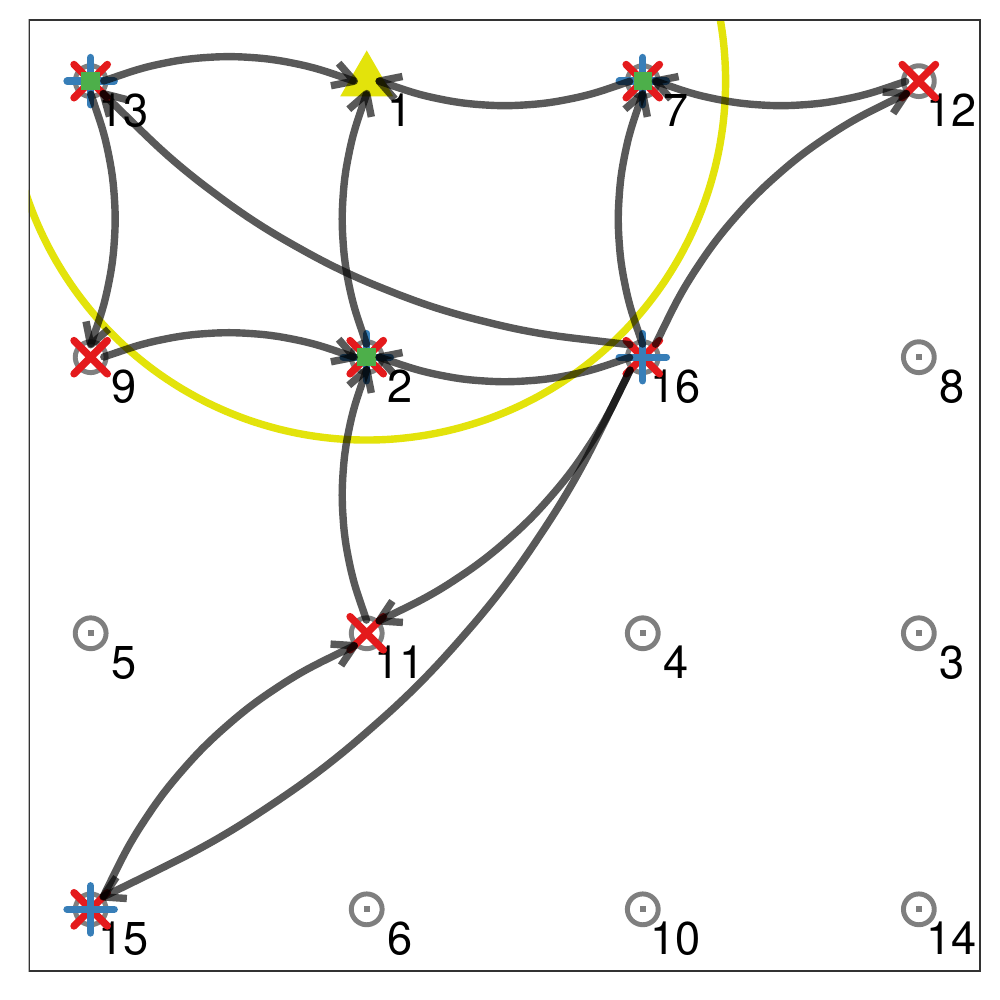}
	\caption{$i=1$}
	\label{fig:dag3}
	\end{subfigure}%
\caption{Reverse maximin ordering on a grid (small gray points) of size $n=4 \times 4 = 16$ on a unit square, $[0,1]^d$ with $d=2$. The $i$th ordered input (${\color[HTML]{e3e30b}\blacktriangle}$), the subsequently ordered $n-i$ inputs (${\color{black}\bm{\medcirc}}$), the distance $\ell_i$ to the nearest neighbor (${\color[HTML]{e3e30b}\boldsymbol{-}}$), the neighboring subsequent inputs $\sparsity_i$ (${\color[HTML]{4DAF4A}\blacksquare}$) within a (yellow) circle of radius $\rho \ell_i$, with $\rho=1.3$, the reduced ancestors $\tilde{\anc}_i$ (${\color[HTML]{377EB8}\bm{+}}$), and the ancestors $\anc_i$ (${\color[HTML]{E41A1C}\bm{\times}}$). The directed acyclic graphs of the sparsity patterns are denoted by arrows ($\bm{\curvearrowright}$).}
\label{fig:dag}
\end{figure}

\section{Proofs \label{app:proofs}}

This section contains the postponed proofs of technical statements in the main paper. A non-rigorous justification for Claim~\ref{claim:reducedancestor} can be also found here.

\begin{proof}[Proof of Proposition~\ref{prop:elbo}]
We have $$\ELBO(q) = \bbE_{q} \log p(\by|\mathbf{f}) - \KL(q(\mathbf{f})\|p(\mathbf{f})),$$
where $\bbE_{q} \log p(\by|\mathbf{f}) = \sum_{i=1}^n \bbE_{q} \log p(y_i|f_i)$. Using a well-known expression for the KL divergence between two Gaussian distributions, we have
\begin{equation}
    2 \KL(q(\mathbf{f})\|p(\mathbf{f})) 
    = \tr\big((\bL\bL^\top)(\bV\bV^\top)^{-1}\big) 
    + (\bfnu-\bfmu)^\top(\bL\bL^\top)(\bfnu-\bfmu) 
    + \log|\bV\bV^\top| - \log|\bL\bL^\top| -n, 
\label{eq:klgaussian}
\end{equation}
where 
$\log|\bV\bV^\top| = 2 \sum_{i=1}^n \log \bV_{ii}$, 
$\log|\bL\bL^\top| = 2 \sum_{i=1}^n \log \bL_{ii}$, 
$(\bfnu-\bfmu)^\top(\bL\bL^\top)(\bfnu-\bfmu) = \sum_{i=1}^n ( (\bfnu-\bfmu)^\top \bL_{:,i} )^2$, 
$\bL_{:,i}$ denotes the $i$th column of $\bL$, and 
\begin{equation}
    \tr\big((\bL\bL^\top)(\bV\bV^\top)^{-1}\big) 
    = \tr\big((\bV^{-1}\bL)^\top(\bV^{-1}\bL)\big) 
    = \sum_{i=1}^n (\bV^{-1}\bL_{:,i})^\top(\bV^{-1}\bL_{:,i}) 
    = \sum_{i=1}^n \|\bV^{-1}\bL_{:,i}\|^2.
\end{equation}
\end{proof}

\begin{proof}[Proof of Proposition~\ref{prop:prior}]
Using a well-known formula for the KL divergence between two Gaussian distributions (e.g., see \eqref{eq:klgaussian}), we have
\begin{equation}
    \KL\big( p(\mathbf{f}) \big\| \tilde{p}(\mathbf{f}) \big) 
    = (\tilde\bfmu - \bfmu)^\top(\tilde\bL\tilde\bL^\top)(\tilde\bfmu - \bfmu)/2 
    + \KL\big( \normal_n(\bfzero,\bK) \big\| \normal_n(\bfzero,(\tilde\bL\tilde\bL^\top)^{-1}) \big),
\end{equation}
which is minimized with respect to $\tilde\bfmu$ by $\tilde\bfmu=\bfmu$, the exact prior mean. Plugging this in, the first summand is zero and the second summand was shown in \citet[][Thm.~2.1]{Schafer2020} to be minimized by an inverse Cholesky factor $\hat\bL$ whose $i$th column can be computed in parallel for $i=1,\ldots,n$ as
\begin{equation}
 \hat\bL_{\sparsity^p_i,i} = \bb_i/\sqrt{\bb_{i,1}}, \quad \text{ with } \bb_i = \bK^{-1}_{\sparsity^p_i,\sparsity^p_i} \be_1.
\end{equation} 
\end{proof}

\begin{proof}[Proof of Proposition~\ref{prop:ancestor}]
$$
\bV^{-1}\hat\bL_{:,i} = \begin{bmatrix} \bV_{1:i-1 , 1:i-1} & \bfzero \\ \bV_{i:n , 1:i-1} & \bV_{i:n , i:n} \end{bmatrix}^{-1} \begin{bmatrix} \bfzero \\ \hat\bL_{i:n , i}\end{bmatrix} = \begin{bmatrix} \bfzero \\ \bV_{i:n , i:n}^{-1} \hat\bL_{i:n , i} \end{bmatrix}    
$$
Let $\bX$ be the inverse of $\bV_{i:n , i:n}$. Then,
$$
(\bV^{-1}\hat\bL_{:,i})_j = \frac{1}{\bV_{j,j}} \left[ \hat\bL_{j , i} - \hat\bL_{j-1 , i} \sum_{r=j-1}^{j-1} \bV_{j,r} \bX_{r-i+1,j-i} - \cdots - \hat\bL_{i , i} \sum_{r=j-1}^{i} \bV_{j,r} \bX_{r-i+1,1} \right]
$$
Since $\sparsity^p_i \subset \anc_i$, $\hat\bL_{j , i} = 0$ for $j \notin \anc_i$. Also, from the definition of $\anc_i$, it can be shown for $j \notin \anc_i$ that 
$
\hat\bL_{j-1 , i} \sum_{r=j-1}^{j-1} \bV_{j,r} \bX_{r-i+1,j-i} = \ldots = \hat\bL_{i , i} \sum_{r=j-1}^{i} \bV_{j,r} \bX_{r-i+1,1} = 0.
$
For instance, suppose $j = i + 1 \notin \anc_i$. Then,
$
(\bV^{-1}\hat\bL_{:,i})_{i+1} = \frac{1}{\bV_{i+1 , i+1}} \left[ \hat\bL_{i+1 , i} - \hat\bL_{i , i} \bV_{i+1 , i} \bX_{1 , 1} \right] = 0,
$
since $\hat\bL_{i+1 , i} = \bV_{i+1 , i} = 0$. Therefore, $(\bV^{-1}\hat\bL_{:,i})_j = 0$ for all $j \notin \anc_i$.
\end{proof}

\begin{proof}[Justification for Claim~\ref{claim:reducedancestor}]

We now provide theoretical justification for our claim that the entries of the vector $\bV^{-1}\hat\bL_{:,i}$ are small outside of $\tilde\anc_i$ with magnitudes that decay exponentially as a function of $\rho$ for each $i = 1 , \ldots , n$. 
In other words, our claim is that for $j \geq i$,
\begin{equation} 
\label{eqn:exponential_decay}
    \log\left(\left| (\bV^{-1}\hat\bL_{:,i} )_{j}\right|\right) \lessapprox \log(n) - \text{dist}\left(\bx_{j}, \bx_{i}\right) / \ell_{j}.
\end{equation}
By the results on exponential screening in \citet{Schafer2017}, the matrix $\hat{\bL}$ satisfies the above decay property for covariances that are Green's functions of elliptic PDEs.
It satisfies even the stronger property with $\ell_{j}$ replaced by $\ell_{i}$.

For a Gaussian likelihood, the matrix $\bV$ satisfies 
\begin{equation}
\label{eqn:ideq_precision}
    \bV \bV^{\top} = \hat{\bL} \hat{\bL}^{\top} + \bR^{-1} =: \bfSigma^{-1}, 
\end{equation}
where $\bR$ is a diagonal covariance matrix of the likelihood.
Interpreted as a PDE, the diagonal matrix $\bR^{-1}$ corresponds to a zero-order term. 
Thus, the associated covariance matrix $(\hat{\bL} \hat{\bL}^{\top})^{-1}$ behaves like a discretized elliptic Green's function and is therefore subject to an exponential screening effect \citep[Section 4.1]{Schafer2020}. 
Let $\bP^{\updownarrow}$ denote the permutation matrix that reverts the order of the degrees of freedom. 
Since $\bP^{\updownarrow} \bV^{-\top} \bP^{\updownarrow}$ is lower triangular and 
\begin{equation}
    \bP^{\updownarrow} \bfSigma \bP^{\updownarrow} = \bP^{\updownarrow} \bV^{-\top} \bP^{\updownarrow} \bP^{\updownarrow} \bV^{-1} \bP^{\updownarrow} = \left(\bP^{\updownarrow} \bV^{-\top} \bP^{\updownarrow}\right) \left(\bP^{\updownarrow} \bV^{-\top} \bP^{\updownarrow}\right)^{\top},
\end{equation}
the matrix $\bP^{\updownarrow} \bV^{-\top} \bP^{\updownarrow}$ is the Cholesky factor of $\bfSigma$ in the maximin (as opposed to the reverse maximin) ordering. 
In \citet{Schafer2017}, it is shown that the Cholesky factors of discretized Green's functions of elliptic PDEs in the maximin ordering have exponentially decaying Cholesky factors. 
In particular, the results of \citet{Schafer2017} suggest that 
\begin{align}
    \forall j \geq i:\  \log\left(\left|\left(\bP^{\updownarrow} \bV^{-\top} \bP^{\updownarrow}\right)_{ji}\right|\right) & \lessapprox \log(n) - \text{dist}\left(\bx_{j}, \bx_{i}\right) / \ell_{i} \\
    \Rightarrow  \forall j \geq i: \  \log\left(\left|\left(\bV^{-1}\right)_{ji}\right|\right) & \lessapprox \log(n) - \text{dist}\left(\bx_{j}, \bx_{i}\right) / \ell_{j}.
\end{align}

As shown, for instance, in \citet[Lemma 5.19]{Schafer2017}, products of matrices that decay rapidly with respect to a distance function $\text{dist}(\cdot, \cdot)$ on its index set, inherit this decay property.
To this end, assume that lower triangular matrices $\bA$ and $\bB$ satisfy this property. 
We then have 
\begin{align}
\log\left(\left|\left(\bA \bB\right)_{ji}\right|\right) &= \log \left(\left|\sum_{k} \bA_{jk} \bB_{ki}\right|\right) 
\leq \log(n) + \log \left(\max_{k} \left| \bA_{jk} \bB_{ki}\right|\right) \\
&\lessapprox \log(n) - \max_{k} \left( \text{dist}\left(\bx_{j}, \bx_{k}\right) / \ell_j - \text{dist}\left(\bx_{j}, \bx_{k}\right) / \ell_k \right). 
\end{align}
By the triangle inequality, we have $\text{dist}\left(\bx_{j}, \bx_{k}\right) + \text{dist}\left(\bx_{k}, \bx_{i}\right) \geq \text{dist}\left(\bx_{j}, \bx_{i}\right)$.
Since the right hand is $- \infty$ unless $j > i$ and thus $\ell_{j} \geq \ell_{i}$, we have thus  
\begin{equation}
    \log\left(\left|\left( \bA \bB\right)_{ji}\right|\right) = \log \left(\left|\sum_{k} \bA_{jk} \bB_{ki}\right|\right) 
    \lessapprox \log(n) - \text{dist}\left(\bx_{j}, \bx_{i}\right) / \ell_{j}, \\
\end{equation}
proving the the result.

For a general exponential family likelihood, the matrix $\bV$ does not necessarily satisfy \eqref{eqn:ideq_precision}. Instead, according to \citet{Nickisch2008}, a quadratic approximation to the log-likelihood under mild conditions implies that 
\begin{equation}
\label{eqn:approx_precision}
    \bV \bV^{\top} = \hat{\bL} \hat{\bL}^{\top} + \bW^{-1}, 
\end{equation}
where $\bW$ is the covariance of the \textit{effective likelihood} obtained by dividing the approximate posterior by the prior. Assuming that $\bW^{-1}$ corresponds to a zero-order term in the context of a PDE, one can also obtain the result from the justification for the Gaussian likelihood case above.
\end{proof}

\begin{proof}[Proof of Proposition~\ref{prop:prediction}]
Note that 
$
p(\mathbf{f}^*|\mathbf{f}) = p(\tilde{\mathbf{f}}) / p(\mathbf{f}) = \normal_{n^*} \left(\bfmu^* + \bK^{*o} \bK^{-1} (\mathbf{f} - \bfmu) , \bK_{*|o}\right), 
$
where $\bK_{*|o} = \bK^{**} - \bK^{*o}\bK^{-1}\bK^{o*}$, and
$
q(\mathbf{f}^*|\mathbf{f}) = q(\tilde{\mathbf{f}}) / q(\mathbf{f}) = \normal_{n^*} \left(\bfnu^* - (\bV^{**})^{-\top} \bV^{o*}{}^{\top} (\mathbf{f} - \bfnu) , (\bV^{**} \bV^{**}{}^{\top})^{-1}\right). 
$
Then, since $\KL \big( p(\mathbf{f}^* | \mathbf{f}) \big\| q(\mathbf{f}^* | \mathbf{f}) \big)$ is a KL divergence between two Gaussian distributions, we have
$$
2 \KL \big( p(\mathbf{f}^* | \mathbf{f}) \big\| q(\mathbf{f}^* | \mathbf{f}) \big) = (\bG \mathbf{f} + \bh)^\top (\bV^{**} \bV^{**}{}^{\top}) (\bG \mathbf{f} + \bh) + 2 \KL \big( \normal_{n^*} \left(\bfzero , \bK_{*|o}\right) \big\| \normal_{n^*} \left(\bfzero , (\bV^{**} \bV^{**}{}^{\top})^{-1}\right) \big)
$$
where $\bG = - (\bV^{**})^{-\top} \bV^{o*}{}^{\top} - \bK^{*o} \bK^{-1}$ and $\bh = \bfnu^* + (\bV^{**})^{-\top} \bV^{o*}{}^{\top} \bfnu - \bfmu^* + \bK^{*o} \bK^{-1} \bfmu$. Using the fact that the first term is quadratic in form, one can show that
$$
\bbE_{p} \Big[ (\bG \mathbf{f} + \bh)^\top (\bV^{**} \bV^{**}{}^{\top}) (\bG \mathbf{f} + \bh) \Big] = (\bG \bfmu + \bh)^\top (\bV^{**} \bV^{**}{}^{\top}) (\bG \bfmu + \bh) + \tr \Big( (\bV^{**} \bV^{**}{}^{\top}) (\bG \bK \bG^\top) \Big).
$$
Then, we can see that $\KL \big( p(\mathbf{f}^* | \mathbf{f}) \big\| q(\mathbf{f}^* | \mathbf{f}) \big)$ is minimized with respect to $\bfnu^*$ by $\bG \bfmu + \bh = \bfzero$. This implies that $\hat{\bfnu}^* = \bfmu^* - (\bV^{**})^{-\top} \bV^{o*}{}^{\top} (\bfnu - \bfmu)$. Plugging this in, we have
\begin{align*}
    \argmin_{\bV^{*} \in \sparsity^*} \bbE_{p} \Big[ \KL \big( p(\mathbf{f}^* | \mathbf{f}) \big\| q(\mathbf{f}^* | \mathbf{f}) \big) \Big] 
    &= \argmin_{\bV^{*} \in \sparsity^*} \Big[ \tr \Big( \bV^{*}{}^{\top} \tilde{\bK} \bV^{*} \Big) - \log \det (\bV^{**} \bV^{**}{}^{\top}) \Big] \\
    &= \argmin_{\bV^{*} \in \sparsity^*} \sum_{i=1}^{n^*} \Big(\bV^*_{\sparsity^*_i , i}{}^\top \tilde{\bK}_{\sparsity^*_i , \sparsity^*_i} \bV^*_{\sparsity^*_i , i} - 2 \log \bV^*_{i,i}\Big)
\end{align*}
Taking the first derivative of the summation with respect to the column vector $\bV^*_{\sparsity^*_i , i}$ and setting it to zero, one can show that $\hat{\bV}^*_{\sparsity^*_i , i} = \tilde{\bK}^{-1}_{\sparsity^*_i , \sparsity^*_i} \be_1 / \bV^*_{i,i}$. Since $\bV^*_{i,i}$ is the first entry of $\hat{\bV}^*_{\sparsity^*_i , i}$, we can have $\hat{\bV}^*_{\sparsity^*_i,i} = \bc_i/\sqrt{\bc_{i,1}}$ where $\bc_i = \tilde{\bK}^{-1}_{\sparsity^*_i,\sparsity^*_i} \be_1$. From the definition of $\sparsity^*_i$, it can be easily shown that $\tilde{\bK}^{-1}_{\sparsity^*_i,\sparsity^*_i} = K(\sparsity^*_i , \sparsity^*_i)^{-1}$.
\end{proof}

\end{document}